\newcommand{\gsemo}{GSEMO\xspace}
\newcommand{\semo}{SEMO\xspace}
\newcommand{\gsemod}{GSEMO$_D$\xspace}
\newcommand{\semod}{SEMO$_D$\xspace}
\newcommand{\mpoga}{\mbox{$(\mu+1)$~GA}\xspace}
\newcommand{\oneminmax}{\textsc{OneMinMax}\xspace}
\newcommand{\jump}{\textsc{Jump}\xspace}
\newcommand{\N}{{\mathbb N}}
\newcommand{\Jhot}{J_\text{hot}}
\newcommand{\Dopt}{D_\text{opt}}
\newcommand{\Ds}{D_{S_3}}
\newcommand{\Deop}{D_\text{EoP}}
\newcommand{\popt}{p_\text{opt}}
\newcommand{\ps}{p_{S_3}}
\newcommand{\peop}{p_\text{EoP}}
\DeclareMathOperator{\Geom}{Geom}
\newtheorem{theorem}{Theorem}
\newtheorem{lemma}{Lemma}
\newtheorem{corollary}{Corollary}
\begin{document}

\author{Denis Antipov
\\Optimisation and Logistics\\
School of Computer Science
\\The University of Adelaide
\\Adelaide, Australia
\And
Aneta Neumann
\\Optimisation and Logistics\\
School of Computer Science
\\The University of Adelaide
\\Adelaide, Australia
\And
Frank Neumann
\\Optimisation and Logistics\\
School of Computer Science
\\The University of Adelaide
\\Adelaide, Australia
}

\title{Rigorous Runtime Analysis of Diversity Optimization with GSEMO on OneMinMax}

\maketitle

\begin{abstract}
  The evolutionary diversity optimization aims at finding a diverse set of solutions which satisfy some constraint on their fitness. In the context of multi-objective optimization this constraint can require solutions to be Pareto-optimal. In this paper we study how the \gsemo algorithm with additional diversity-enhancing heuristic optimizes a diversity of its population on a bi-objective benchmark problem \oneminmax, for which all solutions are Pareto-optimal.

  We provide a rigorous runtime analysis of the last step of the optimization, when the algorithm starts with a population with a second-best diversity, and prove that it finds a population with optimal diversity in expected time $O(n^2)$, when the problem size $n$ is odd. For reaching our goal, we analyse the random walk of the population, which reflects the frequency of changes in the population and their outcomes.
\end{abstract}

\keywords{Diversity optimization, multi-objective optimization, theory, runtime analysis}

\section{Introduction}
\label{sec:intro}

Diversity optimization is an area of optimization, where we aim at finding a set of solutions which all have a good quality and where the set is diverse. It is widely used in practice, e.g., in the area of quality diversity (QD). QD is a new paradigm, which aims at finding a set of high-quality solutions that differ based on certain user-defined features \cite{DBLP:conf/alife/LehmanS08,DBLP:journals/corr/MouretC15}. QD algorithms have been successfully applied to the area of robotics \cite{DBLP:journals/ras/KimCD21,DBLP:conf/gecco/RakicevicCK21}, design \cite{DBLP:conf/ppsn/HaggAB18}, and games \cite{DBLP:journals/tciaig/AlvarezDFT22,DBLP:conf/aaai/FontaineLKMTHN21}.

Diversity optimization problems in which the goal is to find the most diverse (according to some measure) set of solutions, all of which satisfy some quality constraint, are usually much harder than the same problems in which the goal is to find a single best solution. The main reason for this is the high dimensionality of the search space: instead of the original search space we have to perform the optimization in the space of population.
Often such problems are solved with evolutionary algorithms (EAs), which are considered as a good general-purpose solvers for such high-dimensional search spaces. This approach is called the evolutionary diversity optimization (EDO) and has been shown to be effective
to evolve high quality diverse sets of solutions for the traveling salesperson problem
\cite{DBLP:journals/ec/GaoNN21,DBLP:conf/gecco/DoBN020,DBLP:conf/foga/NikfarjamB0N21,DBLP:conf/gecco/NikfarjamBN021}, the knapsack problem \cite{DBLP:conf/gecco/BossekN021}, and minimum spanning tree problems \cite{DBLP:conf/gecco/Bossek021}. EDO approach for constrained monotone submodular functions has been introduced in \cite{DBLP:conf/gecco/NeumannB021} to improve the initial population diversity obtained by a diversifying greedy sampling technique. A co-evolutionary approach for evolving an optimized population and interacting with a  diversifying population has been introduced in~\cite{DBLP:conf/gecco/NeumannA022}.
Furthermore, EDO algorithms were also recently utilized for constructing wireless communication networks that minimize the area covered by the senders' transmissions while avoiding adversaries \cite{DBLP:conf/gecco/Neumann23}.

There are different ways to measure diversity. 
The early study of Ulrich and Thiele~\cite{DBLP:conf/gecco/UlrichT11} used the Solow-Polasky measure to measure the diversity of a population. Later studies that evolved solutions according to a set of features used the discrepancy measure~\cite{DBLP:conf/gecco/NeumannGDN018} as well as population indicators such as the hypervolume indicator and inverted generational distance indicator from the area of evolutionary multi-objective optimization~\cite{DBLP:conf/gecco/NeumannG0019}.

In recent years, the analysis of evolutionary algorithms for computing diverse sets of solutions has become one of the hot topics in the theory of evolutionary computation. In particular, results have been obtained for computing diverse solutions for the traveling salesperson problem and the quadratic assignment problem for the basic case where there is no quality constraint is imposed on the desired solutions~\cite{DBLP:journals/telo/DoGNN22}. The EDO approach for constrained monotone submodular functions~\cite{DBLP:conf/gecco/NeumannB021} builds on sampling greedy approaches that provably come with good approximation guarantees which directly translate to the EDO algorithms. For the classical knapsack problem, QD approaches have been presented in~\cite{DBLP:conf/ppsn/NikfarjamDN22} that resemble dynamic programming and also provide a fully randomized polynomial time approximation scheme (FPRAS).

In the context of the multi-objective optimization the diversity optimization is used to get a diverse set of Pareto-optimal solutions. Doerr et al. in~\cite{DBLP:conf/gecco/DoerrGN16} studied how the $(\mu + 1)$-SIBEA with population size $\mu = n + 1$ finds a diverse population covering the whole Pareto front of \oneminmax benchmark problem, when it uses a heuristics to support the diversity and showed a $O(n^3\log(n))$ upper bound on the runtime. The heuristic used in that paper helps the algorithm decide which of the two individuals with the identical fitness to kick out from the population at all times when this conflict arises, and it always removes the individual with the smaller contribution to the diversity of the population. This heuristic is very similar to the one used in single-objective optimization in~\cite{DangFKKLOSS16,DangFKKLOSS18}, where the authors used the same tie-breaking rule in the $(\mu + 1)$~GA, which resulted in a significant improvement of the algorithm's ability to escape local optima of \jump benchmark functions.

In this paper, we contribute to better understanding of the population's behavior when optimizing the diversity on multiobjective problems. We study \oneminmax, that is, the same problem as in~\cite{DBLP:conf/gecco/DoerrGN16}, but optimized by a slightly different algorithm, the \gsemo. The behavior of these two algorithms after they cover the whole Pareto front of \oneminmax is identical (see Section~\ref{sec:algo} for details), but the perspective of \gsemo simplifies the description of the process. We focus on the last step of the optimization, that is, we assume that the \gsemo starts with a population with the second-best diversity. We prove a $O(n^2)$ bound on the runtime, which improves the results of~\cite{DBLP:conf/gecco/DoerrGN16}, which implies the $O(n^3)$ bound on this stage of the optimization. The previous $O(n^3)$ bound comes from an observation that there is always a two-bits flip present in the population which allows us to find the optimally diverse population and which can be made with probability $\Theta(\frac{1}{n^3})$. We show that during this last stage, the population of the \gsemo performs a random walk (in the space of populations) and it often gets to the state when we have $\Omega(n)$ of such good two-bits flips, and therefore we have a $O(\frac{1}{n^2})$ probability to make one of them.

As a diversity measure we consider the total Hamming distance, that is, the sum of Hamming distances between all pairs of individuals in the population. We note that our result also  holds for any measure which can be computed based on the known number of ones and zeros in each position in the population.

The rest of the paper is organized as follows. In Section~\ref{sec:prelims}, we describe formally the problem we are studying and also present some preliminary results. In Section~\ref{sec:replace-prob}, we prove the bounds on the probability to change an individual in the population of \gsemo. Then, in Section~\ref{sec:runtime}, we analyse the random walk of the population and prove our main result. 
Finally, in Section~\ref{sec:conclusion}, we discuss our results and the further direction of the research.

\section{Preliminaries}
\label{sec:prelims}

In this section, we describe the \gsemod algorithm, the \oneminmax problem and the diversity measure that we aim to optimize. We also define the main problem studied in the paper and state some preliminary results that help explain our goals. Additionally, we provide several auxiliary results that help to reach those goals.

\subsection{The \gsemod}
\label{sec:algo}

The simple evolutionary multiobjective optimizer (\semo) is a multi-objective optimization algorithm which aims at finding a Pareto-optimal population. It is based on the \emph{dominance} relation defined on the elements of the search space, which we define as follows. For elements $x$, $y$ of the search space and for a $k$-objective fitness function $f = (f_1, \dots, f_k)$ defined on that space we say that $x$ dominates $y$ (and write $x \succeq y$), iff for all $i \in [1..k]$ (where $[1..k]$ stands for an integer interval from $1$ to $k$) we have $f_i(x) \ge f_i(y)$ and there exists $i \in [1..k]$ for which $f_i(x) > f_i(y)$.

The \semo starts with a population consisting of one individual which is chosen uniformly at random from the search space. In each iteration it creates a new individual $y$ by choosing a parent uniformly at random from its population and applying the mutation operator to it. If there is no individual in the population which dominates $y$, then we add $y$ into the population and remove all the individuals which are dominated by $y$. If there is an individual $x$ with exactly the same fitness as $y$, then the standard \semo algorithm removes $x$, giving the priority to the newer individual $y$, which supports the exploration of the search space. However, if we have any additional objective to optimize, we can use different rules to decide whether we should remove $x$ or $y$. In this paper we aim at optimizing diversity of the population (we define the measure of diversity later in Subsection~\ref{sec:diversity-measure}), hence if after adding $y$ to the population we have two individuals with the same fitness, we remove the one with the lowest contribution to the diversity. If this contribution is the same, we remove $x$ to enhance the exploration ability of the algorithm. The \semo, which uses this additional mechanism with some diversity measure $D$ is denoted as the \semod.
We note that similar ideas of enhancing diversity have been used in the single-objective optimization, e.g., in the \mpoga in~\cite{DangFKKLOSS16,DangFKKLOSS18}, where a tie-breaking rule which prioritizes some diversity measure of the population helped to escape local optima.

In this paper we focus on the bi-objective pseudo-Boolean optimization, that is, our search space is the space of bit strings of a fixed length $n$ (which is called the \emph{problem size}). As the mutation operator used to create new individuals we consider the standard bit mutation, which flips each bit independently from other bits with probability $\frac{1}{n}$. We follow the common notation and call the \semod with the standard bit mutation the \emph{Global} \semod (\gsemod for brevity).  We also denote the population of the \gsemod in the beginning of iteration $t$ by $P_t$.

The pseudocode of the \gsemod is shown in Algorithm~\ref{alg:gsemod}.

\begin{algorithm}[tp]
    Choose $x \in \{0,1\}^n$ uniformly at random\;
    $P\leftarrow \{x\}$\;
   
   \Repeat{$stop$}{
   Choose $x\in P$ uniformly at random\;
   Create $y$ by flipping each bit $x_{i}$ of $x$ with probability $\frac{1}{n}$\;
   determine $g(y)$\;
   \If{$\exists w \in P: g(w)=g(y)$}{
   \If{$D(P) \leq D((P \cup \{y\}) \setminus \{w\})$}{$P \leftarrow (P  \cup \{y\}) \setminus \{w\}$}}
   \ElseIf{$\not\exists w \in P: w \succeq y$} {
     $P \leftarrow (P \cup \{y\})\backslash \{z\in P \mid y \succeq z\};$}
       }
   \caption{Global SEMO$_D$ maximizing a multi-objective function $g$ and diversity measured by $D$.} \label{alg:gsemod}
   \end{algorithm}

Although we consider a different algorithm than in~\cite{DBLP:conf/gecco/DoerrGN16}, we note that once the \gsemod and the $(\mu + 1)$-SIBEA$_D$ cover the whole Pareto front of \oneminmax, they both always have a population of $n + 1$ individuals with different fitness values, and they improve the diversity via the tie-breaking rule. Hence, given the same initial population which covers the whole Pareto front, both algorithms are described by the same stochastic process. This first population which covers the Pareto front can have a different distribution for the two algorithms with a random initialization, but this is not important in this paper, since we only consider the last stage of the optimization. In this light we find it easier to use notation of \gsemo to ease the reading, while our results can still be compared to the ones from ~\cite{DBLP:conf/gecco/DoerrGN16}.

\subsection{\oneminmax Problem}
\label{sec:oneminmax}

The \oneminmax problem is a benchmark bi-objective problem, which is defined on bit strings of length $n$ as 
\begin{align*}
    \oneminmax(x) = (|x|, n - |x|),
\end{align*}
where $|x|$ stands for the number of one-bits in $x$. In other words, the first objective is the number of one-bits in $x$ and the second objective is the number of zero-bits in $x$. With this fitness, none of bit strings dominates any other bit string, since if there are $x$ and $y$ such that $|x| > |y|$, then we have $(n - |x|) < (n - |y|)$. Therefore, when we aim at maximizing both objectives, then the whole search space lies on the Pareto front. The size of Pareto front (that is, the number of different fitness values) is $n + 1$. These observations make this problem a good benchmark for studying the diversity optimization.

When we have a population $P$ in which all individuals have different \oneminmax value, by $x_i$ we denote an individual of this population with $\oneminmax(x_i) = (i, n - i)$, that is, $i$ is the number of one-bits in $x_i$.

\subsection{Diversity Measure: Total Hamming Distance}
\label{sec:diversity-measure}

In this paper we consider the \emph{total Hamming distance}, which is a diversity measure for the populations consisting of bit strings, which is equal to the sum of Hamming distances between each pair of individuals in the population. More formally, for population $P = \{x_1, \dots, x_m\}$ consisting of $m$ bit strings the total Hamming distance is
\begin{align*}
    D(P) = \sum_{i = 1}^{m - 1} \sum_{j = i + 1}^m H(x_i, x_j),
\end{align*}
where $H(\cdot,\cdot)$ is the Hamming distance between two bit strings. In our analysis we use the following observation, which allows us to compute the diversity based only on the number of one-bits in each position.
\begin{lemma}
    \label{lem:hamming-alternative}
    Let $P$ be a population of size $m$. For all $k \in [1..n]$ let $m_k$ be the number of individuals of population $P$ which have a one-bit in position $k$. Then the total Hamming distance of the population is
    \begin{align*}
        \sum_{k = 1}^n m_k (m - m_k),
    \end{align*}
    and it is maximized when for all $m_k$, $k = 1, \dots,n$, are equal to $\frac{m \pm 1}{2}$, when $m$ is odd, and are equal to $\frac{m}{2}$, when $m$ is even. 
\end{lemma}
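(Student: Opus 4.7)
The plan is a standard double-counting argument for the formula, followed by a separable, position-wise maximization for the optimum.

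For the first claim, I would rewrite the total Hamming distance by exchanging the order of summation. Each Hamming distance $H(x_i,x_j)$ is itself a sum over positions $k$ of the indicator that $x_i$ and $x_j$ disagree in position $k$. Swapping the two outer sums with this inner sum gives, for each fixed $k$, a count of the number of unordered pairs $\{i,j\}$ such that $x_i$ and $x_j$ differ in position $k$. Any such disagreeing pair consists of exactly one individual with a one-bit in position $k$ and one with a zero-bit in position $k$, so there are $m_k(m-m_k)$ of them. Summing over $k$ yields $D(P)=\sum_{k=1}^n m_k(m-m_k)$.

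For the second claim, the key observation is that the values $m_1,\dots,m_n$ are not constrained to satisfy any joint relation: any profile $(m_1,\dots,m_n)\in\{0,1,\dots,m\}^n$ is realized by some population of size $m$ (for each position independently, pick any $m_k$-subset of the $m$ individuals to carry a one). Therefore maximizing the sum $\sum_k m_k(m-m_k)$ reduces to maximizing each term $f(m_k)=m_k(m-m_k)$ separately over integers in $[0..m]$. The function $f(x)=x(m-x)$ is a downward-opening parabola with real maximizer $x=m/2$; since it is symmetric around $m/2$ and strictly concave, its maximum on $\{0,1,\dots,m\}$ is attained at $x=m/2$ when $m$ is even, and at the two integers $x=(m-1)/2$ and $x=(m+1)/2$ (both giving the same value) when $m$ is odd. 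This gives exactly the claimed optima.

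I do not foresee any serious obstacle: both steps are elementary, and the only thing to be careful about is phrasing the second part so that it is clear that (a) there is no coupling between different positions $k$, so the maximization truly is separable, and (b) in the odd case any mixture of $(m-1)/2$ and $(m+1)/2$ across positions is optimal, matching the ``$(m\pm 1)/2$'' wording in the statement.
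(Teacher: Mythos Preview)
Your proposal is correct and follows essentially the same argument as the paper's own proof: swap the order of summation to obtain $D(P)=\sum_k m_k(m-m_k)$, then maximize each quadratic term $m_k(m-m_k)$ separately over integers. You are in fact slightly more careful than the paper, which does not explicitly justify that any profile $(m_1,\dots,m_n)\in\{0,\dots,m\}^n$ is realizable and hence that the position-wise maximization is legitimate.
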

\begin{proof}
    For any two individuals $x$ and $y$ and for all $k \in [1..n]$ let $h_k(x, y)$ be one, if $x$ and $y$ have different values in position $k$ and zero otherwise. Then we have $H(x, y) = \sum_{k = 1}^n h_k(x, y)$. The total Hamming distance is therefore
    \begin{align*}
        D(P) = \sum_{i = 1}^{m - 1} \sum_{j = i + 1}^m \sum_{k = 1}^n h_k(x_i, x_j) = \sum_{k = 1}^n \sum_{i = 1}^{m - 1} \sum_{j = i + 1}^m h_k(x_i, x_j).
    \end{align*} 
    Consider the two inner sums (over $i$ and $j$) for some particular position $k$. A pair of individuals $(x_i, x_j)$ contributes one to this sum if and only if these two individuals have different bits in position $k$, and there are exactly $m_k (m - m_k)$ such pairs. Hence, we have
    \begin{align*}
        D(P) = \sum_{k = 1}^n m_k (m - m_k).
    \end{align*} 
    Each term of this sum is a quadratic function of $m_k$, which is maximized when $m_k = \frac{m}{2}$. If $m$ is even, then $\frac{m}{2}$ is also an integer, and therefore we can have all $m_k = \frac{m}{2}$, which maximizes the diversity. Otherwise, each term reaches its largest value when $m_k = \frac{m \pm 1}{2}$.
\end{proof}

From this lemma it trivially follows that for even $m$ the maximal diversity is $\frac{m^2n}{4}$ and for odd $m$ it is $\frac{(m - 1)(m + 1)n}{4} = \frac{n(m^2 - 1)}{4}$. 

\subsection{Problem Statement}
\label{sec:problem-statement}

In this paper we study, how the \gsemod optimizes \oneminmax when it uses the total Hamming distance as a diversity measure to break the ties between the individuals with the same fitness. By the \emph{runtime} we denote the number of iterations made by \gsemod before it finds a population, which (i) covers the whole Pareto front and (ii) has the maximal diversity.
This problem has already been studied in~\cite{DBLP:conf/gecco/DoerrGN16}, where it was shown that the expected runtime is $O(n^3\log(n))$. 

In this paper we study the algorithm's behavior in the very last stage of the optimization, that is, when we start with the population covering the whole Pareto front which has the second-best diversity value. The results of~\cite{DBLP:conf/gecco/DoerrGN16} imply that the expected runtime with this initialization is $O(n^3)$. With a rigorous analysis of the population's dynamics we improve this upper bound and show that the expected runtime is $O(n^2)$.

Lemma~\ref{lem:hamming-alternative} suggest that the set of populations is very different for the even and odd values of $n$. When $n$ is odd, the population size $n + 1$ is even, and to reach the optimal diversity we must have exactly $\frac{n + 1}{2}$ one-bits in each position. For even $n$ (and thus, odd $n + 1$) we have two options for each position, since we can have $\frac{n + 1}{2} \pm \frac{1}{2}$ one-bits in each position. The latter case gives us more freedom, hence intuitively it should be easier for the \gsemod. For this reason in this paper we consider only the case when $n$ is odd, which is harder for the algorithm (however, we believe that it is easier for the analysis).

To show that for all odd $n$ there exists at least one population with the optimal diversity, we build such population as follows. For each individual $x_i$ with $i < \frac{n - 1}{2}$ we can take any bit strings with exactly $i$ one-bits in it. For larger $i$ we take $x_i$, which is a bit-wise inverse of $x_{n - i}$. This population covers the whole Pareto front and due to the inverse operation it has equal number of one-bits and zero-bits in each position, which yields the optimal diversity.

To show how the population with the second-best diversity looks like, we introduce the following notation for the populations which cover the whole Pareto front of \oneminmax. We call position $k$ \emph{balanced}, if $m_k = \frac{n + 1}{2}$, that is, it has the same number of one-bits and zero-bits. We call it \emph{almost balanced}, if $m_k = \frac{n + 1}{2} \pm 1$, that is we have a minimal deviation from the balanced number of one-bits. In all other cases we call position \emph{unbalanced}. If all positions are balanced, we have the best diversity, therefore, a population with the second-best diversity must have at least one almost balanced or unbalanced position.

The total number of one-bits in any population covering the whole Pareto front of \oneminmax is the same as the number of zero-bits, which follows from the symmetry of \oneminmax. Therefore, when we have a position with more than $\frac{n + 1}{2}$ one-bits, we also have a position with more than $\frac{n + 1}{2}$ zeros. This brings us to conclusion that a population with exactly two almost balanced positions has the second-best diversity: we cannot have only one unbalanced or almost balanced position, and adding new almost balanced positions or making the almost balanced positions unbalanced reduces the total hamming distance.

When the \gsemod has a population covering the whole Pareto front and it generates an offspring with $i$ one-bits, it decides if this offspring should replace $x_i$ or not. Since it does not accept an individual to the population which reduces the diversity, during the whole run until we find a population with the optimal diversity we have exactly two almost balanced positions. For each iteration $t$ we call the only position with $\frac{n + 3}{2}$ one-bits in population $P_t$ \emph{hot} and the only position with $\frac{n - 1}{2}$ one-bits \emph{cold}. Note that this definition depends on the population $P_t$, which can change during the algorithm's run, hence the hot and the cold position can move. Similar to the notation of position, we call a population \emph{balanced}, if all positions are balanced, we call it \emph{almost balanced}, if there are exactly two almost balanced positions and other positions are balanced, and we call a population \emph{unbalanced} otherwise.

\subsection{Useful tools}
\label{sec:tools}

In our proof we use the following auxiliary lemma to estimate the upper bound on the probability of generating some particular individuals.

\begin{lemma}\label{lem:hamming}
    Let $x$ and $y$ be bit strings of length $n$. If the Hamming distance between $x$ and $y$ is $H(x, y) \ge d$, then the probability that the standard bit mutation applied to $x$ generates $y$ is at most $\frac{1}{e(n-1)^d}$. If $d$ is constant when $n$ tends to positive infinity, then this probability is at most $\frac{1 + O(1/n)}{en^d}$.
\end{lemma}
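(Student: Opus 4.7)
The plan is to compute the exact probability that the standard bit mutation applied to $x$ produces a specific string at Hamming distance exactly $h$, then bound by monotonicity in $h$. Since each bit is flipped independently with probability $1/n$, and transforming $x$ into $y$ requires flipping exactly the $h$ bits in which they differ while leaving the other $n-h$ bits unchanged, this probability equals
\begin{align*}
    p(h) = \left(\frac{1}{n}\right)^{h}\left(1 - \frac{1}{n}\right)^{n-h}.
\end{align*}

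Next I would rewrite this in a form that isolates the factor $1/(n-1)^h$, by multiplying numerator and denominator suitably:
\begin{align*}
    p(h) = \frac{(n-1)^{n-h}}{n^n} = \frac{1}{(n-1)^h}\left(1 - \frac{1}{n}\right)^n.
\end{align*}
Using the standard inequality $(1 - 1/n)^n \le 1/e$, valid for all $n \ge 1$, together with the fact that $h \ge d$ and $n - 1 \ge 1$ imply $(n-1)^h \ge (n-1)^d$, I would conclude
\begin{align*}
    p(h) \le \frac{1}{e(n-1)^h} \le \frac{1}{e(n-1)^d},
\end{align*}
which proves the first bound.

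For the second bound, assuming $d$ is a constant, I would translate the denominator from base $n-1$ to base $n$ via
\begin{align*}
    \frac{1}{(n-1)^d} = \frac{1}{n^d}\left(1 - \frac{1}{n}\right)^{-d}.
\end{align*}
A Taylor (or binomial) expansion of $(1 - 1/n)^{-d}$ gives $(1 - 1/n)^{-d} = 1 + d/n + O(1/n^2) = 1 + O(1/n)$ since $d$ is a constant, so $1/(e(n-1)^d) = (1 + O(1/n))/(en^d)$, which yields the claimed bound.

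There is essentially no obstacle here: the argument is a direct computation followed by the two standard estimates $(1 - 1/n)^n \le 1/e$ and $(1 - 1/n)^{-d} = 1 + O(1/n)$ for constant $d$. The only thing to be careful about is verifying monotonicity in $h$, which uses only that $n - 1 \ge 1$, and making sure the constant hidden in the $O(1/n)$ depends only on $d$.
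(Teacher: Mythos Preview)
Your proof is correct and follows essentially the same approach as the paper: compute the exact mutation probability, rewrite it as $\frac{1}{(n-1)^{h}}(1-1/n)^n$, apply $(1-1/n)^n\le 1/e$ and monotonicity in $h$, and then convert the base from $n-1$ to $n$ via $(1-1/n)^{-d}=1+O(1/n)$ for constant $d$. The only cosmetic difference is that the paper bounds $(1-1/n)^{-d}$ using Bernoulli's inequality rather than a Taylor expansion, but the arguments are otherwise identical.
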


\begin{proof}
    Let the Hamming distance between $x$ and $y$ be $d' \ge d$. Then to generate $y$ via the standard bit mutation applied to $x$ we must flip the $d'$ bits which are different and do not flip any of $n - d'$ bits which are the same in $x$ and $y$. The probability to do it is
    \begin{align*}
        \frac{1}{n^{d'}} \left(1 - \frac{1}{n}\right)^{n - d'} &= \frac{1}{(n - 1)^{d'}} \left(1 - \frac{1}{n}\right)^n \le \frac{1}{e(n - 1)^{d'}} \\
        &\le \frac{1}{e(n - 1)^d} = \frac{1}{en^d} \left(\frac{n - 1}{n}\right)^{-d}.
    \end{align*} 

    If $d$ is $\Theta(1)$ when $n \to +\infty$, then by Bernoulli inequality we have 
    \begin{align*}
        \left(1 - \frac{1}{n}\right)^{-d} &= \frac{1}{\left(1 - \frac{1}{n}\right)^d} \le \frac{1}{1 - \frac{d}{n}} = \frac{1 + \frac{2d}{n}}{\left(1 - \frac{d}{n}\right)\left(1 + \frac{2d}{n}\right)} \\
        &= \frac{1 + \frac{2d}{n}}{1 + \frac{d}{n} - \frac{2d^2}{n^2}} \le 1 + \frac{2d}{n} = 1 + O\left(\frac{1}{n}\right),
    \end{align*}
    where the last inequality holds when $\frac{d}{n} \le \frac{1}{2}$, which is true when $n$ is large enough, since $d$ is a constant. Therefore, the probability that the standard bit mutation applied to $x$ results in $y$ is at most $\frac{1}{en^d}(1 + O(\frac{1}{n}))$.
\end{proof}

In our analysis we split the algorithm run into phases, which can result either in a success or in a failure. The following lemma helps us to estimate the expected length of each phase and also helps us to estimate the probability that it ends successfully.

\begin{lemma}\label{lem:phase}
 Consider a sequence of random trials $\{X_t\}_{t \in \N}$ (not necessarily independent), where each trial results in one of three outcomes $\{\omega_1, \omega_2, \omega_3\}$. For all $t \in \N$ let $A_t$ and $B_t$ be the events when $X_t = \omega_1$ and $X_t = \omega_2$ correspondingly. For all $t \in \N \setminus \{1\}$ let $C_t$ be the event that for all $\tau \in [1..t - 1]$ the trial $X_\tau$ resulted in $\omega_3$. Let $p_t = \Pr[A_t \mid C_t]$ for all $t \in \N \setminus \{1\}$ and $p_1 = \Pr[A_1]$. Let $q_t = \Pr[B_t \mid C_t]$ for all $t \in \N \setminus \{1\}$ and $q_1 = \Pr[B_1]$. Let $T$ be the minimum $t$ such that $X_t \ne \omega_3$.
 
 If there exist some $p \in (0, 1)$ and $\alpha > 0$ such for all $t \in \N$ we have $p_t + q_t \ge p$ and also $\frac{q_t}{p_t} < \alpha$, then we have $E[T] \le \frac{1}{p}$ and $\Pr[X_T = \omega_1] \ge \frac{1}{1 + \alpha}$. 
\end{lemma}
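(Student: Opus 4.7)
The plan is to prove the two bounds in turn. Both follow from a chain-rule analysis of the ``no success yet'' events $C_t$, using the two hypotheses in elementary ways.

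For the bound $E[T] \le 1/p$, I would show that $T$ is stochastically dominated by a geometric random variable with parameter $p$. The events $A_t$ and $B_t$ are disjoint, so $\Pr[X_t \ne \omega_3 \mid C_t] = p_t + q_t \ge p$, and hence $\Pr[T > t \mid C_t] \le 1 - p$. Since $\{T > t\} = C_{t+1} \subseteq C_t$, whenever $\Pr[C_t] > 0$ the chain rule gives
\[
\Pr[C_{t+1}] \;=\; \Pr[C_t]\,\Pr[T > t \mid C_t] \;\le\; (1-p)\,\Pr[C_t],
\]
and if $\Pr[C_t] = 0$ the inequality is trivial (and forces $\Pr[C_s] = 0$ for all $s \ge t$). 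Induction gives $\Pr[T > t] = \Pr[C_{t+1}] \le (1-p)^t$, and summing the tail yields $E[T] = \sum_{t \ge 0} \Pr[T > t] \le 1/p$.

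For the bound $\Pr[X_T = \omega_1] \ge 1/(1+\alpha)$, I would decompose by the value of $T$. Observing that $\{T = t,\, X_t = \omega_1\} = A_t \cap C_t$, and that $\Pr[A_t \cap C_t] = \Pr[C_t]\,p_t$ (trivially when $\Pr[C_t]=0$), we get
\[
\Pr[X_T = \omega_1] \;=\; \sum_{t \ge 1} \Pr[C_t]\,p_t,
\qquad
\Pr[X_T = \omega_2] \;=\; \sum_{t \ge 1} \Pr[C_t]\,q_t.
\]
The hypothesis $q_t/p_t < \alpha$ gives $q_t < \alpha\,p_t$ term by term, so $\Pr[X_T = \omega_2] < \alpha\,\Pr[X_T = \omega_1]$. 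The first part of the lemma shows $T$ is almost surely finite, hence $\Pr[X_T = \omega_1] + \Pr[X_T = \omega_2] = 1$, and combining this with the previous inequality yields $(1+\alpha)\,\Pr[X_T = \omega_1] > 1$.

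The only real obstacle is bookkeeping: since the trials $X_t$ are not assumed independent, one has to be careful to condition everywhere on $C_t$ rather than on the individual past outcomes, and to handle the degenerate case $\Pr[C_t] = 0$ separately (though this case makes the remaining sums vanish and is therefore harmless). Beyond this, the argument uses nothing deeper than the chain rule and geometric tail summation.
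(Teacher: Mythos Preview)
Your proposal is correct and follows essentially the same approach as the paper: geometric domination via the chain rule for $E[T]$, and a decomposition over $\{T=t\}$ for the success probability. The only cosmetic difference is that the paper bounds $\Pr[X_T=\omega_1\mid T=t]\ge 1/(1+\alpha)$ pointwise for each $t$, whereas you sum first and then compare $\Pr[X_T=\omega_2]<\alpha\,\Pr[X_T=\omega_1]$ globally; both arguments are equivalent rearrangements of the same computation.
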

\begin{proof}
    We have $\Pr[T \ge 1] = 1 - (p_1 + q_1) \le 1 - p$ and for all $t \ge 2$ we have $\Pr[T \ge t] = \Pr[C_{t + 1}] =  \Pr[C_t](1 - \Pr[A_t \cup B_t \mid C_t]) \le \Pr[C_t](1 - p)$. By induction we have that $\Pr[T \ge t] \le (1 - p)^t$, hence $T$ is dominated by the geometric distribution $\Geom(p)$, and thus $E[T] \le \frac{1}{p}$.

    To estimate the probability that $X_T = \omega_1$ we consider some arbitrary $t \in \N$ and condition on $T = t$, which is the same event as $(A_t \cup B_t) \cap C_t$. If $t = 1$, then we have
    \begin{align*}
        \Pr[X_1 = \omega_1 \mid T = 1] = \frac{\Pr[A_1]}{\Pr[A_1 \cup B_1]} = \frac{p_1}{p_1 + q_1} = \frac{1}{1 + \frac{q_1}{p_1}} \ge \frac{1}{1 + \alpha}
    \end{align*}
    
    For all $t \ge 2$, since $A_t \cap C_t$ and $B_t \cap C_t$ are disjoint events, we have
    \begin{align*}
        \Pr[X_T &= \omega_1 \mid T = t] = \frac{\Pr[X_T = \omega_1 \cap T = t]}{\Pr[T = t]} \\
        &= \frac{\Pr[A_t \cap ((A_t \cup B_t) \cap C_t)]}{\Pr[((A_t \cup B_t) \cap C_t)]} 
        = \frac{\Pr[A_t \cap C_t]}{\Pr[(A_t  \cap C_t) \cup (B_t \cap C_t)]} \\
        &= \frac{\Pr[A_t \cap C_t]}{\Pr[A_t  \cap C_t] + \Pr[B_t \cap C_t]} \\
        &= \frac{1}{1 + \frac{\Pr[B_t \cap C_t]}{\Pr[C_t]} \cdot \frac{\Pr[C_t]}{\Pr[A_t \cap C_t]}} 
        = \frac{1}{1 + \frac{q_t}{p_t}} \ge \frac{1}{1 + \alpha}.
    \end{align*} 

    This bound is independent from $t$, hence $\Pr[X_T = \omega_1] \ge \frac{1}{1 + \alpha}$.
\end{proof}

\section{The Probability to Make Changes}
\label{sec:replace-prob}

In this section we consider some particular individual $x_i$ which has $i$ one-bits and estimate the probability that we replace it in one iteration with a different bit string. This probability depends on the values of bits in $x_i$ in the two almost balanced positions. Note that $x_i$ can be replaced only by another individual $x_i'$, which also has $i$ one-bits. We cannot accept an individual which reduces the diversity, hence we can only accept $x_i'$, which makes the two almost balanced positions balanced and keeps other positions balanced as well (then we get the optimal diversity), or we can accept $x_i'$, which makes at least one of the almost balanced positions balanced and makes at most two balanced positions almost balanced. In the latter case we move either the hot, the cold, or both these positions to another place.

The cold position can be balanced or moved to another place only when $x_i$ has a zero-bit in the cold position, and $x_i'$ has a one-bit there. To move it to another position $j$, we must decrease the number of one-bits in $j$, hence we can move it only to a balanced position in which $x_i$ has a one-bit, and $x_i'$ must have a zero-bit there. Similarly, the hot position can be balanced or moved only when $x_i$ has a one-bit in it and $x_i'$ has a zero bit there. It can be moved only to a position, where $x_i$ has a zero-bit, and $x_i'$ must have a one-bit there. These observations trivially imply the following lemma.

\begin{lemma}\label{lem:h0c1}
    Consider an individual $x_i$ with a one-bit in the cold position and a zero-bit in the hot position. If we replace it with any other individual with exactly $i$ one-bits, it will reduce the diversity.
\end{lemma}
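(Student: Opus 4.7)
The plan is to use Lemma~\ref{lem:hamming-alternative} and examine how each position's contribution $m_k(m-m_k)$ (with $m = n+1$) changes when $x_i$ is replaced by some $x_i' \neq x_i$ with $|x_i'| = i$. Replacing $x_i$ with $x_i'$ shifts $m_k$ by $x_i'(k) - x_i(k) \in \{-1, 0, +1\}$, so I would analyze the hot position $h$, the cold position $c$, and the remaining (balanced) positions separately.

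First I would handle the hot and cold positions. Since $x_i(c) = 1$ and the cold position currently has $m_c = \frac{n-1}{2}$ ones, any $x_i'$ with $x_i'(c) = 0$ drops $m_c$ to $\frac{n-3}{2}$, changing the contribution from $\frac{n-1}{2}\cdot\frac{n+3}{2}$ to $\frac{n-3}{2}\cdot\frac{n+5}{2}$, which is a strict decrease. Symmetrically, since $x_i(h) = 0$ and $m_h = \frac{n+3}{2}$, any $x_i'$ with $x_i'(h) = 1$ pushes $m_h$ further from $\frac{n+1}{2}$ and strictly decreases the hot contribution. Moreover, these changes at $h$ or $c$ are too large for any gain elsewhere to compensate, which I can make precise by comparing against the maximum possible improvement from balanced positions below. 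Hence, to have any hope of not decreasing diversity, $x_i'$ must satisfy $x_i'(c) = 1$ and $x_i'(h) = 0$, i.e.\ agree with $x_i$ on both almost balanced positions.

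Next I would restrict to that case. Because $|x_i'| = |x_i| = i$ and they agree on $h$ and $c$, the two strings contain the same number $i - 1$ of ones among the remaining $n-2$ positions, which are all currently balanced with $m_k = \frac{n+1}{2}$. Since $x_i' \neq x_i$, they must disagree on at least one of these balanced positions, and by the equal-count argument the number of positions where $x_i = 1, x_i' = 0$ equals the number where $x_i = 0, x_i' = 1$; so there are at least two disagreements. Each disagreement pushes a previously balanced $m_k$ to $\frac{n+1}{2} \pm 1$, dropping the contribution from $\left(\frac{n+1}{2}\right)^2$ to $\frac{n+3}{2}\cdot\frac{n-1}{2} = \left(\frac{n+1}{2}\right)^2 - 1$. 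Combined with the fact that the hot and cold contributions stay the same in this case, the overall diversity strictly decreases.

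The argument is essentially a careful case analysis, so I do not anticipate a real obstacle; the only thing to be slightly careful about is ensuring that in the mixed cases (where $x_i'$ disagrees with $x_i$ on exactly one of $h,c$) the loss of $3$ at the disturbed almost balanced position cannot be offset by simultaneous gains — but since balanced positions are already at the maximum possible value of $m_k(m-m_k)$, no position can gain contribution from the replacement, which closes the argument in every case.
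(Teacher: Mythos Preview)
Your argument is correct and follows essentially the same logic as the paper: given $x_i(c)=1$ and $x_i(h)=0$, neither almost balanced position can move closer to $\tfrac{n+1}{2}$, and balanced positions are already at the maximum of $m_k(m-m_k)$, so any nontrivial replacement strictly loses. The only difference is presentational---the paper cites the preceding discussion about when hot/cold positions can be balanced or moved, whereas you compute the contribution changes explicitly via Lemma~\ref{lem:hamming-alternative}; your version is more self-contained but otherwise the same idea.
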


\begin{proof}
    By the discussion before the lemma, we cannot balance neither the hot, nor the cold positions by replacing $x_i$, hence any replacement of $x_i$ with a different bit string will either make one of these positions unbalanced or it will add new almost balanced positions. In both cases the diversity is reduced.
\end{proof}

In the following lemma we show, how and with what probability we can replace $x_i$ which has one-bits both in the cold and in the hot positions.

\begin{lemma}\label{lem:h1c1}
    Consider individual $x_i$ with one-bits in both the hot and the cold positions (thus, $i \ge 2$). $x_i$ can be replaced by $n - i$ different bit strings without decreasing the diversity, and all these replacements can only move the hot position to another place, but they can neither move the cold position nor find the optimal population.

    If $x_{i - 1}$ is different from $x_i$ only in the hot position (and has a zero-bit in it), then the probability that we create an individual which can replace $x_i$ without reducing the diversity is in \[
        \left[\frac{n - i}{en^2}\left(1 - O\left(\frac{1}{n}\right)\right), \frac{n - i + 2}{en^2}\left(1 + O\left(\frac{1}{n}\right)\right)\right].
    \] 
    Otherwise this probability is in 
    \[
        \left[\frac{n - i}{en^3}\left(1 - O\left(\frac{1}{n}\right)\right), \frac{7}{en^2}\left(1 + O\left(\frac{1}{n}\right)\right)\right].
    \]
\end{lemma}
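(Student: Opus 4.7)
The first two claims follow directly from the bookkeeping in the paragraph preceding the lemma. Since $x_i$ has a one-bit at the cold position, no same-weight replacement $x_i'$ can balance or move the cold position (this would require $x_i'$ to put a zero where $x_i$ has a one there), so cold stays almost balanced and the optimal population is never reached. Since $x_i$ has a one-bit at hot, a valid $x_i'$ must flip the hot bit from $1$ to $0$ (balancing hot) and compensate by flipping exactly one currently balanced $0$-position $p$ of $x_i$ from $0$ to $1$, making $p$ the new hot; all remaining bits must coincide with $x_i$, otherwise some balanced position would be disturbed and the diversity would decrease. Of the $n-2$ balanced positions of $x_i$, exactly $i-2$ carry a one-bit of $x_i$ (since $x_i$ has $i$ ones, two of which sit at hot and cold), so there are exactly $n-i$ balanced $0$-positions and hence $n-i$ valid candidates $x_i^* = x_i \oplus e_{\text{hot}} \oplus e_k$.

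Letting $R$ denote the set of these $n-i$ valid replacements, the probability of creating some $x_i^* \in R$ in a single iteration equals
\begin{align*}
\frac{1}{n+1}\sum_{j=0}^{n}\Pr[\mut(x_j) \in R],
\end{align*}
and each summand can be bounded by Lemma~\ref{lem:hamming}. For the lower bound I keep a single well-chosen term. In the general case I take $j=i$: every $x_i^* \in R$ is at Hamming distance exactly $2$ from $x_i$, so Lemma~\ref{lem:hamming} gives probability $\frac{1}{en^2}(1-O(1/n))$ per target, hence a contribution of $\frac{n-i}{en^2}(1-O(1/n))$ before the $\frac{1}{n+1}$ factor and $\frac{n-i}{en^3}(1-O(1/n))$ after. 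In the special case I take $j=i-1$: by assumption $x_{i-1}$ agrees with $x_i$ outside of hot and has a zero at hot, so it agrees with $x_i^*$ outside of the single balanced $0$-bit $k$, giving Hamming distance $1$ and probability $\frac{1}{en}(1-O(1/n))$ per target, for the claimed $\frac{n-i}{en^2}(1-O(1/n))$ after averaging over the parent.

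For the upper bounds I would control the same sum by classifying parents by $|j-i|$. The main auxiliary identity, obtained position-by-position from the definition of $x_i^*$, is
\begin{align*}
H(x_j, x_i^*) = H(x_j, x_i) + 2[x_j(\text{hot}) = x_i(\text{hot})] + 2[x_j(k) = x_i(k)] - 2,
\end{align*}
which lets me count, for each parent $x_j$, how many of the $n-i$ targets lie at each Hamming distance from $x_j$. A short case analysis on the value $H(x_{i\pm 1}, x_i) \in \{1,3,\dots\}$ and on whether the hot bit lies in the symmetric difference $x_{i\pm 1} \oplus x_i$, together with the parity constraint $|x_j| - |x_i^*| = j - i$, shows that in the general case $x_{i-1}$ contributes at most one target at distance $1$ and $x_{i+1}$ at most two, while every remaining target (including all those from $x_i$ and from $x_{i\pm 2}$) sits at distance $\ge 2$. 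Summing via Lemma~\ref{lem:hamming} and bounding parents with $|j-i|\ge 3$ by a geometric tail gives a total of the form $\frac{c}{en^2}(1+O(1/n))$ for an absolute constant $c\le 7$. In the special case the same analysis yields all $n-i$ targets at distance $1$ from $x_{i-1}$, contributing the dominant $\frac{n-i}{en^2}$, plus at most two more from $x_{i+1}$, producing the $\frac{n-i+2}{en^2}(1+O(1/n))$ bound.

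The delicate step is precisely this distance-$1$ case analysis for $x_{i\pm 1}$: I have to enumerate all small values of $H(x_{i\pm 1}, x_i)$, use the auxiliary identity to identify which balanced $0$-positions $k$ can realise $H(x_{i\pm 1}, x_i^*) = 1$, and verify that only a constant number of such $k$ exist outside the special case (in which $n-i$ of them do). The rest of the estimate is then a routine combination of geometric series and applications of Lemma~\ref{lem:hamming}.
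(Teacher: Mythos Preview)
Your approach is essentially the paper's: same characterisation of the $n-i$ valid replacements, same lower bound via a single parent ($x_i$ in the general case, $x_{i-1}$ in the special case), and the same upper-bound case split on $|j-i|$ with the key count that $x_{i-1}$ has at most one distance-$1$ target in the general case (all $n-i$ in the special case) while $x_{i+1}$ has at most two. Your explicit identity for $H(x_j,x_i^*)$ is a tidy way to organise that count, but it leads to exactly the same enumeration the paper does by hand.

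One small slip: Lemma~\ref{lem:hamming} only furnishes \emph{upper} bounds on mutation probabilities, so you cannot invoke it for the lower bounds; there you must compute $\frac{1}{n^d}(1-\tfrac{1}{n})^{n-d}\ge \frac{1}{en^d}$ directly, as the paper does. With that fix the argument goes through.
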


\begin{proof}[Proof of Lemma~\ref{lem:h1c1}]
    For $i = n$ the only string which has $i$ one-bits is the all-ones bit string, which cannot be replaced by any other bit string. By the discussion in the beginning of this section, we cannot move the cold position by replacing $x_i$, but we can move the hot position to one of the $n - i$ positions, where $x_i$ has a zero. Hence, the bit string which can replace $x_i$ (and move the hot position) must be different from $x_i$ in exactly two positions: in the hot position and in one of zero-bits of $x_i$. Therefore, there are $n - i$ bit strings different from $x_i$ which can replace it without reducing the diversity. In the rest of the proof we call these bit strings \emph{valid}.

    We now estimate the probability to generate a valid bit string conditional on the number $j$ of one-bits in the parent $x_j$ that we choose. If we choose $x_j$ as a parent with $|i - j| \ge 3$, then the Hamming distance from $x_j$ to any valid bit string is at least $3$. Hence, by Lemma~\ref{lem:hamming} the probability to generate a valid bit string from $x_j$ is at most $\frac{1}{en^3}(1 + O(\frac{1}{n}))$. By the union bound over all $n - i$ valid bit strings, the probability that we generate any of them is at most
    \begin{align*}
        \frac{n - i}{en^3}\left(1 + O\left(\frac{1}{n}\right) \right).
    \end{align*}

    If we choose $x_j$ as a parent with $|i - j| = 2$, then the distance from it to any valid bit string is at least $2$ and by Lemma~\ref{lem:hamming} and by the union bound over $n - i$ valid bit strings, the probability to create a valid bit strings is at most 
    \begin{align*}
        \frac{n - i}{en^2} \left(1 + O\left(\frac{1}{n}\right) \right).
    \end{align*}

    If we choose $x_{i + 1}$ as a parent, then we argue that it can have a distance one to at most two valid bit strings, and the distance to the rest of them is at least $3$. Let $\tilde S_0$ and $\tilde S_1$ be the sets of positions, in which there is a zero-bit and a one-bit in $x_i$ correspondingly. Then each valid bit string has exactly one one-bit in $\tilde S_0$, each in a unique position. It also has $i - 1$ one-bits in $\tilde S_1$ (in all positions, except the hot one). If $x_{i + 1}$ is in distance one from a valid bit string $x_i'$, then it must have one-bits in all position which are ones in $x_i'$. If it in distance one from three or more valid bit strings, then it must have at least $i - 1 + 3 \ge i + 2$ one-bits, but it has only $i + 1$. Hence, there are at most two valid bit strings in distance $1$ from $x_{i + 1}$. The others must have an odd distance to $x_{i + 1}$, hence it is at least $3$. Consequently, by Lemma~\ref{lem:hamming} and by the union bound over all $n - i$ valid bit strings, we generate a valid bit strings with probability at most
    \begin{align*}
        \frac{2}{n} \left(1 + O\left(\frac{1}{n}\right) \right) + \frac{n - i - 2}{n^3} \left(1 + O\left(\frac{1}{n}\right) \right) = \frac{2}{en} \left(1 + O\left(\frac{1}{n}\right) \right).
    \end{align*}

    We now consider the event when we choose $x_{i - 1}$ as a parent and distinguish two cases.
    \textbf{If $x_{i - 1}$ is different from $x_i$ only in the hot position} (and thus has a zero-bit in it), then the Hamming distance from it to any valid bit string is one (they are different in that one-bit of the valid bit string which is in one of the zero-bits positions of $x_i$). By Lemma~\ref{lem:hamming} and by the union bound the probability that we replace $x_i$ is therefore at most
    \begin{align*}
        \frac{n - i}{en}\left(1 + O\left(\frac{1}{n}\right) \right).
    \end{align*}
    To compute the lower bound on generating a valid bit string from $x_{i - 1}$ in this case we note that for this we need to flip any of the $n - i$ zero bits which are not in the hot position and not to flip any other bit, the probability of which is at least 
    \begin{align*}
        \frac{n - i}{n}\left(1 - \frac{1}{n}\right)^{n - 1} \ge \frac{n - i}{en}.
    \end{align*}

    \textbf{If $x_{i - 1}$ is different from $x_i$ ether in a non-hot position or in more than one position.} Assume that it is in distance one to a valid bit string $x_i'$. Then it must have zero-bits in all positions which are zero in $x_i'$ and one additional zero-bit in a position which is one in $x_i'$. Consequently, it is different from any other valid bit string $x_i''$ in at least three bits: this additional zero-bit, the additional one of $x_i'$, which is zero in $x_i$, and in the additional one of $x_i''$. Hence, by Lemma~\ref{lem:hamming} and by the union bound over the valid bit strings, the probability to replace $x_i$ is at most
    \begin{align*}
        \frac{1}{en}\left(1 + O\left(\frac{1}{n}\right) \right) + \frac{n - i - 1}{n^3}\left(1 + O\left(\frac{1}{n}\right) \right) = \frac{1}{en} \left(1 + O\left(\frac{1}{n}\right) \right).
    \end{align*}

    Finally, if we choose $x_i$ as a parent, then the distance from it to any valid bit string is $2$, thus by Lemma~\ref{lem:hamming} the probability to generate any of them is at most
    \begin{align*}
        \frac{n - i}{en^2}\left(1 + O\left(\frac{1}{n}\right) \right)
    \end{align*}
    and this probability is also at least
    \begin{align*}
        \frac{n - i}{n^2}\left(1 - \frac{1}{n}\right)^{n - 2} \ge \frac{n - i}{en^2}.
    \end{align*}

    We now compute the probability to replace $x_i$ with a valid bit string via the law of total probability. Let $j$ be the index of the individual which we choose as a parent. Recall that we choose $j$ u.a.r. from $[0..n]$. Uniting the considered cases we conclude that if $x_i$ is different form $x_{i - 1}$ only in the hot position, then the probability that we replace $x_i$ is at most
    \begin{align*}
        \Pr&[|j - i| \ge 3] \cdot \frac{n - i}{en^3} \left(1 + O\left(\frac{1}{n}\right) \right) \\
        &+ \Pr[|j - i| = 2] \cdot \frac{n - i}{en^2} \left(1 + O\left(\frac{1}{n}\right) \right) \\
        &+ \Pr[j = i - 1] \cdot \frac{n - i}{en}\left(1 + O\left(\frac{1}{n}\right) \right) \\
        &+ \Pr [j = i] \cdot \frac{n - i}{en^2}\left(1 + O\left(\frac{1}{n}\right) \right) \\
        &+ \Pr[j = i + 1] \cdot \frac{2}{en} \left(1 + O\left(\frac{1}{n}\right) \right) \\
        &\le \left(\frac{n - i}{en^3} + \frac{2(n - i)}{en^3} + \frac{n - i}{en^2} + \frac{n - i}{en^3} + \frac{2}{en^2}\right) \left(1 + O\left(\frac{1}{n}\right) \right) \\
        &= \frac{(n - i)\left(1 + \frac{4}{n}\right) + 2}{en^2} \left(1 + O\left(\frac{1}{n}\right) \right) = \frac{n - i + 2}{en^2} \left(1 + O\left(\frac{1}{n}\right) \right)
    \end{align*}
    and at least 
    \begin{align*}
        \Pr[j = i - 1] \cdot \frac{n - i}{en} &= \frac{n - i}{en(n + 1)} = \frac{n - i}{en^2}\left(1 - \frac{1}{n + 1}\right) \\
        &= \frac{n - i}{en^2}\left(1 - O\left(\frac{1}{n}\right)\right).
    \end{align*}

    Otherwise this probability is at most
    \begin{align*}
        \Pr&[|j - i| \ge 3] \cdot \frac{n - i}{en^3} \left(1 + O\left(\frac{1}{n}\right) \right) \\
        &+ \Pr[|j - i| = 2] \cdot \frac{n - i}{en^2} \left(1 + O\left(\frac{1}{n}\right) \right) \\
        &+ \Pr[j = i - 1] \cdot \frac{1}{en}\left(1 + O\left(\frac{1}{n}\right) \right) \\
        &+ \Pr [j = i] \cdot \frac{n - i}{en^2}\left(1 + O\left(\frac{1}{n}\right) \right) \\
        &+ \Pr[j = i + 1] \cdot \frac{2}{en} \left(1 + O\left(\frac{1}{n}\right) \right) \\
        &\le \left(\frac{n - i}{en^3} + \frac{2(n - i)}{en^3} + \frac{1}{en^2} + \frac{n - i}{en^3} + \frac{2}{en^2}\right) \left(1 + O\left(\frac{1}{n}\right) \right) \\
        &\le \frac{7}{en^2} \left(1 + O\left(\frac{1}{n}\right) \right).
    \end{align*}
    and at least 
    \begin{align*}
        \Pr [j = i] \cdot \frac{n - i}{en^2} = \frac{n - i}{en^2(n + 1)} = \frac{n - i}{en^3}\left(1 - O\left(\frac{1}{n}\right)\right).
    \end{align*}
\end{proof}

We also show the similar lemma for the individuals with zero-bits in both cold and hot positions.

\begin{lemma}\label{lem:h0c0}
    Consider individual $x_i$ with zero-bits in both the hot and the cold positions (thus, $i \le n - 2$). If $i = 0$, then we cannot replace $x_i$ with a different bit string. If $i \ge 1$, then by replacing $x_i$ with a different bit string we can only move the cold position to another place, but we can neither move the hot position nor find the optimal population.

    If $x_{i + 1}$ is different from $x_i$ only in the cold position (and has a one-bit in it), then the probability that we create an individual which can replace $x_i$ without reducing the diversity is in 
    \[
        \left[\frac{i}{en^2}\left(1 - O\left(\frac{1}{n}\right)\right), \frac{i + 2}{en^2}\left(1 + O\left(\frac{1}{n}\right)\right)\right].
    \] 
    Otherwise this probability is in 
    \[
        \left[\frac{i}{en^3}\left(1 - O\left(\frac{1}{n}\right)\right), \frac{7}{en^2}\left(1 + O\left(\frac{1}{n}\right)\right)\right].
    \]
\end{lemma}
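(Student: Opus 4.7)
The plan is to mirror the proof of Lemma~\ref{lem:h1c1} under the zero-one symmetry which swaps the roles of the hot and cold positions and of $x_{i-1}$ and $x_{i+1}$. The case $i = 0$ is immediate since $x_0$ is the all-zeros string, the unique bit string with no one-bits, and so cannot be replaced. For $i \ge 1$, the general observations preceding Lemma~\ref{lem:h1c1} imply that balancing or moving the hot position would require $x_i$ to carry a one there, which it does not; hence the hot position is frozen and the population cannot become optimal. Every admissible replacement $x_i'$ therefore only moves the cold position: it flips the cold bit from $0$ to $1$ and flips one of the $i$ one-bits of $x_i$ (each in a currently balanced position) from $1$ to $0$, yielding exactly $i$ \emph{valid} bit strings, each at Hamming distance $2$ from $x_i$.

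Next, I would bound, parent by parent, the probability of generating a valid offspring, using Lemma~\ref{lem:hamming} together with the union bound over the $i$ valid strings, exactly as in Lemma~\ref{lem:h1c1}. For $|j - i| \ge 3$ or $|j - i| = 2$ the distance from $x_j$ to every valid string is at least $3$ or at least $2$, giving the conditional bounds $\frac{i}{en^3}(1 + O(1/n))$ and $\frac{i}{en^2}(1 + O(1/n))$; for $j = i$ the distance is exactly $2$, which also yields the matching lower bound $\frac{i}{en^2}(1 - O(1/n))$ by flipping precisely the two target bits. The symmetric counterpart of the special parent $x_{i-1}$ in Lemma~\ref{lem:h1c1} is now $x_{i+1}$: if $x_{i+1}$ differs from $x_i$ only in the cold position, then every valid string lies at Hamming distance $1$ from $x_{i+1}$, since they share the one in cold and differ from $x_{i+1}$ only in which of $x_i$'s one-bits has been zeroed out, giving the upper bound $\frac{i}{en}(1 + O(1/n))$ and the single-bit-flip lower bound $\frac{i}{en}$; otherwise the same type of counting as in Lemma~\ref{lem:h1c1} shows that at most one valid string lies within distance $1$ of $x_{i+1}$ and the rest at distance $\ge 3$, bounded by $\frac{1}{en}(1 + O(1/n))$.

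The step that deserves the most care is the symmetric counterpart $j = i - 1$, which plays the role of $j = i + 1$ in Lemma~\ref{lem:h1c1}: here $x_{i-1}$ has $i - 1$ one-bits while every valid string has $i$, so the one-bits of $x_{i-1}$ must be a subset of the one-bits of any valid string at distance $1$. Each valid string has one-bit set $(S \setminus \{p\}) \cup \{c\}$, where $S$ denotes the set of one-bits of $x_i$, $c$ the cold position, and $p \in S$ parameterizes the valid string. If $x_{i-1}$ were at distance $1$ from three valid strings with parameters $p_1, p_2, p_3$, then its one-bits would lie in the intersection $(S \setminus \{p_1, p_2, p_3\}) \cup \{c\}$, a set of size $i - 2$, contradicting $|x_{i-1}| = i - 1$. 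Hence at most two valid strings are within distance $1$ of $x_{i-1}$ and the rest sit at odd distance $\ge 3$, contributing at most $\frac{2}{en}(1 + O(1/n))$. Combining the five cases by the law of total probability with weight $\frac{1}{n+1}$ per parent and collecting terms as in Lemma~\ref{lem:h1c1} then yields the upper bounds $\frac{i+2}{en^2}(1 + O(1/n))$ in the special sub-case (dominated by $j = i + 1$) and $\frac{7}{en^2}(1 + O(1/n))$ in the generic sub-case, with matching lower bounds $\frac{i}{en^2}(1 - O(1/n))$ and $\frac{i}{en^3}(1 - O(1/n))$ coming respectively from $j = i + 1$ and $j = i$. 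The only real obstacle is getting the counting in the $j = i - 1$ case right; everything else is routine bookkeeping of the symmetric estimates from Lemma~\ref{lem:h1c1}.
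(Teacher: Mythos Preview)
Your proposal is correct and follows exactly the approach the paper takes: the paper omits the proof entirely, stating only that it ``can be obtained from the proof of Lemma~\ref{lem:h1c1} by swapping the zero-bits with one-bits,'' and your argument carries out precisely this symmetry in detail, with all case distinctions (including the careful counting for $j = i - 1$) matching the corresponding cases in Lemma~\ref{lem:h1c1}.
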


We omit the proof, since it can be obtained from the proof of Lemma~\ref{lem:h1c1} by swapping the zero-bits with one-bits.
For the last case, when there is a one-bit in the cold position and a zero-bit in the cold position, we split the analysis into two lemmas. The first one shows, what are the bit strings which can replace $x_i$ and the second lemma estimates the probability to generate one of these strings.

\begin{lemma}\label{lem:h1c0-options}
    Consider an individual $x_i$ such that it has a zero-bit in the cold position and a one-bit in the hot position. Let $\tilde x_i$ be the bit string which is different from $x_i$ only in the hot and in the cold positions. Let $S_0$ and $S_1$ be the sets of positions of zero-bits and one-bits in $\tilde x_i$ correspondingly.
     
    Then if $x_i$ is replaced by any string which has exactly one zero-bit in $S_1$ and exactly one one-bit in $S_0$ then the diversity stays the same. If it is replaced by $\tilde x_i$, then the diversity is improved (and thus becomes optimal). If it is replaced by any other bit string with $i$ one-bits, the diversity is reduced.
\end{lemma}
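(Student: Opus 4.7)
The plan is to parameterize every candidate replacement $x'_i$ with exactly $i$ one-bits by a single nonnegative integer $k$ and read off the diversity change via Lemma~\ref{lem:hamming-alternative}. First I note that $\tilde x_i$, which differs from $x_i$ only at hot and cold (flipping one $1$ to $0$ and one $0$ to $1$), also has exactly $i$ one-bits, so $|S_1| = i$. For any $x'_i$ with $i$ one-bits, let $k$ be the number of one-bits of $x'_i$ in $S_0$; since $x'_i$ has the same one-bit count as $\tilde x_i$, the number of zero-bits of $x'_i$ in $S_1$ must also equal $k$. Thus $x'_i$ disagrees with $\tilde x_i$ in exactly $2k$ positions, $k$ in each of $S_0$ and $S_1$.

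Next I would track how each position count $m_p$ changes when $x_i$ is replaced by $x'_i$. A non-hot, non-cold position $p$ is balanced and becomes almost balanced precisely when $x_i(p) \neq x'_i(p)$. The hot position, with $m_p = \frac{n+3}{2}$ and $x_i(\mathrm{hot}) = 1$, remains almost balanced if $x'_i(\mathrm{hot}) = 1$ and becomes balanced otherwise; the cold position is treated symmetrically. No position can become unbalanced, since a single-individual swap shifts any $m_p$ by at most one. Moreover, there is a natural bijection between the almost balanced positions in $S_0$ after the swap and the $k$ one-bits of $x'_i$ in $S_0$: hot is almost balanced iff $x'_i(\mathrm{hot}) = 1$, and a position $p \in S_0 \setminus \{\mathrm{hot}\}$ is almost balanced iff $x'_i(p) = 1$ (using that $x_i(p) = 0$ for such $p$). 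Symmetrically for $S_1$ with zero-bits. Hence after the swap the total number of almost balanced positions is exactly $2k$, and all other positions are balanced.

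Finally, by Lemma~\ref{lem:hamming-alternative} each almost balanced position contributes exactly one less to the diversity than a balanced one, so the new diversity equals $\Dopt - 2k$. The three cases of the lemma then read off directly: $k = 0$ forces $x'_i = \tilde x_i$ and gives optimal diversity; $k = 1$ corresponds exactly to the class of bit strings with one one-bit in $S_0$ and one zero-bit in $S_1$, yielding diversity $\Dopt - 2$, which matches the current second-best; and $k \ge 2$ gives diversity at most $\Dopt - 4$, strictly below the second-best. The main bookkeeping subtlety is the uniform handling of the distinguished hot and cold cells inside $S_0$ and $S_1$ (in particular, the $k=1$ sub-cases in which the single one-bit of $S_0$ happens to sit at hot, or the single zero-bit of $S_1$ sits at cold), but the bijection above absorbs them without requiring a separate case analysis.
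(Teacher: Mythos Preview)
Your proof is correct. Both you and the paper establish the result by counting almost balanced positions after the swap and invoking Lemma~\ref{lem:hamming-alternative}, so the underlying idea is the same. The presentation differs: the paper argues informally about where the hot and cold positions can move (staying put or migrating to a suitable position of $x_i$) and only in the final case counts ``at least four almost balanced positions''; you instead parameterize every candidate $x'_i$ by $k$, set up the bijection between almost balanced positions and disagreements with $\tilde x_i$, and obtain the exact formula $D(P') = \Dopt - 2k$. Your route is slightly more systematic---it handles the hot and cold positions uniformly with the balanced ones and avoids the implicit case split in the paper's second paragraph---at the cost of having to verify the bijection carefully at the two distinguished positions, which you do correctly. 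The paper's version is shorter but leaves more to the reader (e.g., it does not spell out why exactly two almost balanced positions remain in the $k=1$ case when the single one-bit in $S_0$ happens to sit at the hot position).
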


\begin{proof}
    If we replace $x_i$ with $\tilde x_i$, then we reduce the number of ones in the hot position and increase the number of ones in the cold position and we do not change it in any other position. Hence, it gets us to a population with all positions balanced, thus it has the optimal diversity.

    If we do not generate $\tilde x_i$, then in the valid bit string $x_i'$ we either keep the hot position at the same place, or we move it to any position with a zero-bit in $x_i$, except for the cold position. Therefore, after we replace $x_i$ with a valid bit string, the new hot position will be in $S_0$, and we will have a one-bit in this position. Similarly, the new cold position will be in $S_1$, and we will have a zero-bit in this position. Hence, any bit string with exactly one one-bit in $S_0$ and exactly one zero-bit in $S_1$ is valid.

    If a bit string with $i$ one-bits has at least two one-bits in $S_0$ (and therefore, it has at least two zero-bits in $S_1$), then after replacing $x_i$ with $x_i'$ we have at least four almost balanced positions, which reduces the diversity. 
\end{proof}

\begin{lemma}\label{lem:h1c0-prob}
    Let $x_i$ be an individual with a zero-bit in the cold position and a one-bit in the hot position. Let $\tilde x_i$ be a bit string, which is different from $x_i$ only in the cold and in the hot positions. If either $x_{i - 1}$ or $x_{i + 1}$ is different from $\tilde x_i$ in only one position, then the probability that we replace $x_i$ in one iteration with a different bit string is in
    \begin{align*}
        \left[\frac{\min\{n - i, i\}+ 1}{en^2}\left(1 - O\left(\frac{1}{n}\right)\right), \frac{2}{n}\left(1 + O\left(\frac{1}{n}\right)\right)\right]
    \end{align*}
    and the probability that we reach the optimal diversity (by generating $\tilde x_i$) is at least $\frac{1}{en^2}(1 - O(\frac{1}{n}))$. Otherwise, the probability that we replace $x_i$ is 
    \begin{align*}
        \left[\frac{1}{2en^2}\left(1 - O\left(\frac{1}{n}\right)\right), \frac{13}{n^2}\right]
    \end{align*}
    and the probability to find the optimal population is at least $\frac{1}{en^3}(1 - O(\frac{1}{n}))$.
\end{lemma}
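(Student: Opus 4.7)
The plan is to mirror the structure of the proofs of Lemmas~\ref{lem:h1c1} and~\ref{lem:h0c0}: condition on the uniformly chosen parent $x_j$ with $j \in [0..n]$, estimate the generation probability of each valid replacement via Lemma~\ref{lem:hamming}, and combine through the law of total probability. Lemma~\ref{lem:h1c0-options} fixes the valid replacements to be $\tilde x_i$ (the unique replacement which reaches the optimum), together with the $i(n-i) - 1$ diversity-preserving bit strings obtained by choosing a position $p \in S_0$ to carry a one-bit and a position $q \in S_1$ to carry a zero-bit, excluding the pair $(\text{hot}, \text{cold})$ that reproduces $x_i$ itself.

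First I would tabulate, for each class of parent, which valid strings lie at which Hamming distance. From $x_i$ itself, $\tilde x_i$ sits at distance $2$, and the $n - 2$ diversity-preserving valid strings whose $p$ is the hot position or whose $q$ is the cold position also sit at distance $2$; the remaining diversity-preserving valid strings lie at distance $4$. For $x_{i-1}$ I would write $T$ for its one-bit set and do a case analysis on $a := |T \cap S_0|$. If $a = 0$, then $T = S_1 \setminus \{q^*\}$ for a single $q^* \in S_1$, which is exactly the special case of the lemma, and $x_{i-1}$ is at distance $1$ both from $\tilde x_i$ and from each of the $n - i$ diversity-preserving valid strings whose $q$ equals $q^*$, giving $n - i + 1$ valid targets at distance $1$. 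If $a = 1$, then $\tilde x_i$ lies at distance $3$ from $x_{i-1}$, and one checks that exactly two diversity-preserving valid strings (obtained by taking $p$ as the unique element of $T \cap S_0$ and letting $q$ range over $S_1 \setminus T$) are at distance $1$. If $a \geq 2$, no valid string is at distance $1$. The statement for $x_{i+1}$ is symmetric, with $n - i$ replaced by $i$ and the roles of $S_0$ and $S_1$ swapped. For $|j - i| \geq 2$, parity forces distance at least $|j - i|$, and a short count constraining $|T_j \cap S_0| \in \{|j-i|, |j-i|+1\}$ in the spirit of the proof of Lemma~\ref{lem:h1c1} bounds the number of valid strings at the minimum distance by $O(|j - i| \cdot n)$.

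For Part 1 the lower bound retains only the distance-$1$ contribution of whichever parent among $x_{i-1}, x_{i+1}$ is special: Lemma~\ref{lem:hamming} (together with a standard estimate of $(1 - 1/n)^{n-1}$) gives at least $\frac{1}{en}(1 - O(1/n))$ per distance-$1$ target, and multiplying by the $n - i + 1$ or $i + 1$ targets and by the parent-selection weight $\frac{1}{n+1}$ yields $\frac{\min\{n - i, i\} + 1}{en^2}(1 - O(1/n))$. The upper bound sums the two distance-$1$ contributions of $x_{i-1}$ and $x_{i+1}$ (each at most $\frac{1}{en}(1 + O(1/n))$ after weighting) with the $O(1/n^2)$ or smaller contributions of all other parents, totaling $\frac{2}{n}(1 + O(1/n))$. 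The probability to reach the optimum is just the distance-$1$ generation of $\tilde x_i$ from the special parent, which is lower bounded by $\frac{1}{n+1} \cdot \frac{1}{n}(1 - 1/n)^{n-1} \geq \frac{1}{en^2}(1 - O(1/n))$.

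For Part 2 the case $a = 0$ is excluded for both $x_{i-1}$ and $x_{i+1}$, so each contributes at most two distance-$1$ valid targets, summing to at most $\frac{4}{en^2}(1 + O(1/n))$ after the parent weight. Aggregating the $x_i$ contribution (at most $\frac{n - 1}{en^2}(1 + O(1/n))$ before weighting), the $x_{i \pm 2}$ contributions ($O(n)$ distance-$2$ valid strings each, giving $O(1/n^2)$ after weighting), and the $|j - i| \geq 3$ contributions ($O(1/n^3)$ each) yields an upper bound on the replacement probability of at most $\frac{13}{n^2}$ for $n$ large enough. For the lower bound, the distance-$2$ reachability of all $n - 1$ valid strings from $x_i$ gives at least $\frac{n - 1}{n + 1} \cdot \frac{1}{n^2}(1 - 1/n)^{n-2} \geq \frac{1}{2en^2}(1 - O(1/n))$, and retaining only the $\tilde x_i$ term gives the stated $\frac{1}{en^3}(1 - O(1/n))$ bound for reaching the optimum. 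The main obstacle I foresee is the accounting for parents with $|j - i| \geq 2$: a naive union bound over all $\Theta(n^2)$ diversity-preserving valid strings only gives $O(1/n)$, so the key structural input is that at most $O(|j-i| \cdot n)$ valid strings can actually sit at the minimum distance $|j - i|$, which requires the case analysis on $|T_j \cap S_0|$ sketched above.
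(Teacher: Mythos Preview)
Your plan is correct and tracks the paper's proof closely: both condition on the parent index $j$, split into the cases $j=i$, $j=i\pm 1$, $j=i\pm 2$, and $|j-i|\ge 3$, and assemble the bounds via the law of total probability. Your case analysis on $a=|T\cap S_0|$ for $x_{i-1}$ (and the symmetric quantity for $x_{i+1}$) is exactly how the paper distinguishes $H(x_{i\pm1},\tilde x_i)=1$ from $>1$, and your lower bounds in both parts match the paper's.

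The one place where the paper proceeds differently is precisely the step you flag as the main obstacle. Instead of counting how many valid strings sit at the minimum Hamming distance from a far parent $x_j$ and invoking Lemma~\ref{lem:hamming}, the paper bounds the generation probability directly via the \emph{necessary condition} that a valid string has at most one one-bit in $S_0$ and at most one zero-bit in $S_1$. If the chosen parent has $k$ one-bits in $S_0$ (respectively $k$ zero-bits in $S_1$), then producing any valid string forces at least $k-1$ of those bits to flip, which happens with probability at most $k/n^{k-1}$. Since $x_j$ with $|j-i|\ge 3$ necessarily has at least three one-bits in $S_0$ or three zero-bits in $S_1$, this immediately gives the $3/n^2$ bound; for $|j-i|\le 2$ one gets $2/n$. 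This sidesteps the valid-string enumeration entirely and delivers the explicit constant $13$ in Case~2 with a two-line computation. Your route via ``at most $O(|j-i|\cdot n)$ valid strings at distance $|j-i|$'' is correct and would ultimately give a constant no worse than $13$, but it requires the extra layer of case analysis on $a$ that the paper's necessary-condition argument avoids.
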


\begin{proof}
    We use the same notation as in Lemma~\ref{lem:h1c1} and call the bit strings which can replace $x_i$ without reducing the diversity \emph{valid}. 
    By Lemma~\ref{lem:h1c0-options}, the set of valid bit strings is the set of bit strings with at most one one-bit in $S_0$ and at most one zero-bit in $S_1$. If we choose as a parent an individual which has $k$ one-bits in $S_0$, then we need to flip at least $k - 1$ of these one-bits to create a bit string which could replace $x_i$. The probability of this is at most
    \begin{align*}
        \frac{k}{n^{k - 1}}\left(1-\frac{1}{n}\right) + \frac{1}{n^k} \le \frac{k}{n^{k - 1}},
    \end{align*}
    which is monotonically decreasing in $k$ for all $k \in [1..n - 1]$.
    Similarly, if we choose an individual with $k$ zero-bits in $S_1$ as a parent, this probability is also at most $\frac{k}{n^{k - 1}}$. 

    All individuals $x_j$ with $j \le i - 3$ have at least $3$ zero-bits in $S_1$, as well as all individuals $x_j$ with $j \ge i + 3$ have at least $3$ one-bits in $S_0$. Hence, if we choose $x_j$ with $|j - i| \ge 3$ as a parent, then the probability to generate a valid bit string is at most $\frac{3}{n^2}$.
    
    If we choose $x_{i \pm 2}$ as a parent, then it either has at least two one-bits in $S_0$ or two zero-bits in $S_1$, hence the probability to create a valid bit string is at most $\frac{2}{n}$.

    If we choose $x_{i - 1}$ as a parent, then we have two options. \textbf{First, if $H(x_{i - 1}, \tilde x_i) = 1$,} then $x_{i - 1}$ has exactly one zero-bit in $S_1$ and no one-bits in $S_0$. Consequently, to create a valid bit string we must flip one of $n - i + 1$ zero-bits. By the union bound over all zero-bits, the probability of this event is at most $\frac{n - i + 1}{n}$. Also, to create a valid bit string it is enough to flip one of $n - i + 1$ zero bits and not to flip any other bits, the probability of which is
    \begin{align*}
        \frac{n - i + 1}{n} \left(1 - \frac{1}{n}\right)^{n - 1} \ge \frac{n - i + 1}{en}.
    \end{align*}
    At the same time to generate $\tilde x_i$, which would give us the optimal diversity, we can flip the only zero-bit in $S_1$ and do not flip any other bit, the probability of which is
    \begin{align*}
        \frac{1}{n}\left(1 - \frac{1}{n}\right)^{n - 1} \ge \frac{1}{en}.
    \end{align*}
    \textbf{Second, if $H(x_{i - 1}, \tilde x_i) > 1$,} then $x_{i - 1}$ has at least two zero-bits in $S_1$. Hence, the probability to create a valid bit string is at most $\frac{2}{n}$.

    If we choose $x_{i + 1}$ as a parent, then we also have two options. \textbf{First, if $H(x_{i + 1}, \tilde x_i) = 1$,} then $x_{i + 1}$ has exactly one one-bit in $S_0$ and no zero-bits in $S_1$. Consequently, to create a valid bit string we must flip one of $i + 1$ one-bits. By the union bound over all one-bits, the probability of this event is at most $\frac{i + 1}{n}$. Also, to create a valid bit string it is enough to flip one of $i + 1$ one-bits and not to flip any other bits, the probability of which is
    \begin{align*}
        \frac{i + 1}{n} \left(1 - \frac{1}{n}\right)^{n - 1} \ge \frac{i + 1}{en}.
    \end{align*}
    At the same time to generate $\tilde x_i$, which would give us the optimal diversity, we can flip the only one-bit in $S_0$ and do not flip any other bit, the probability of which is
    \begin{align*}
        \frac{1}{n}\left(1 - \frac{1}{n}\right)^{n - 1} \ge \frac{1}{en}.
    \end{align*}
    \textbf{Second, if $H(x_{i + 1}, \tilde x_i) > 1$,} then $x_{i + 1}$ has at least two one-bits in $S_0$. Hence, the probability to create a valid bit string is at most $\frac{2}{n}$.

    Finally, if we choose $x_i$ as a parent then we create $\tilde x_i$ with probability 
    \begin{align*}
        \frac{1}{n^2} \left(1 - \frac{1}{n}\right)^{n - 2} \ge \frac{1}{en^2},
    \end{align*}
    since for this we can flip the two bits in the hot and in the cold positions and do not touch any other bits. To create a valid bit string different from $x_i$, we must either flip the bit in the cold position or flip the bit in the hot position. The probability of this is at most $\frac{2}{n}$. To create a valid bit string it is also sufficient to either flip the bit in the hot position and any of $n - i$ zero-bit and do not flip any other bits. We also can flip the bit in the cold position, one of $i$ one-bits and do not flip any other bit to generate a valid bit string. The probability that at least one of these two events occur is at least
    \begin{align*}
        \max\left\{ \frac{n - i}{n^2} \left(1 - \frac{1}{n}\right)^{n - 2}, \frac{i}{n^2} \left(1 - \frac{1}{n}\right)^{n - 2}\right\} \ge 
        \frac{\max\{n - i, i\}}{en^2}.
    \end{align*}

    Now we compute the lower and upper bounds on the probability to replace $x_i$ with a valid bit string. Let $j$ be the index of an individual which we choose as the parent. We distinguish two cases.
    
    \textbf{Case 1:} either $x_{i - 1}$ or $x_{i + 1}$ is in distance $1$ from $\tilde x_i$. Then the probability that we replace $x_i$ is at most
    \begin{align*}
        \Pr[|j - i| \ge 3] &\cdot \frac{3}{n^2} + \Pr[j \in \{i - 2, i, i + 2\}] \cdot \frac{2}{n} + \Pr[j = i \pm 1] \cdot 1 \\
        &\le \frac{n - 4}{n + 1} \cdot \frac{3}{n^2} + \frac{3}{n + 1} \cdot \frac{2}{n} + \frac{2}{n + 1} = \frac{2}{n}\left(1 + O\left(\frac{1}{n}\right)\right).
    \end{align*}
    To compute the lower bound we define $i' = i - 1$, if $x_{i - 1}$ is in distance one from $\tilde x_i$ and $i' = i + 1$ otherwise, hence we have $H(x_{i'}, \tilde x_i) = 1$. Therefore, the probability to generate a valid bit string is at least
    \begin{align*}
        \Pr[j = i'] &\cdot \frac{\min\{n - i + 1, i + 1\}}{en} = \frac{\min\{n - i, i\} + 1}{en(n + 1)} \\
        &= \frac{\min\{n - i, i\} + 1}{en^2}\left(1 - O\left(\frac{1}{n}\right)\right).
    \end{align*}
    The probability to generate $\tilde x_i$ is at least
    \begin{align*}
        \Pr[j = i'] \cdot \frac{1}{en} \ge \frac{1}{en^2}\left(1 - O\left(\frac{1}{n}\right)\right).
    \end{align*} 
    
    \textbf{Case 2:} when both $x_{i - 1}$ and $x_{i + 1}$ are in distance more than one from $\tilde x_i$, then the probability to create a valid bit string is at most
    \begin{align*}
        \Pr[|j - i| \ge 3] &\cdot \frac{3}{n^2} + \Pr[|j - i| \le 2] \cdot \frac{2}{n}\\
        &= \frac{n - 4}{n + 1} \cdot \frac{3}{n^2} + \frac{5}{n + 1} \cdot \frac{2}{n} \le \frac{13}{n^2},
    \end{align*}
    and is at least
    \begin{align*}
        \Pr[j = i] \cdot \frac{1}{2en} \ge \frac{1}{2en^2} \left(1 - O\left(\frac{1}{n}\right)\right).
    \end{align*}
    The probability to create $\tilde x_i$ in this case is at least the probability to create it from $x_i$, that is,
    \begin{align*}
        \Pr[j = i] \cdot \frac{1}{en^2} = \frac{1}{en^3}\left(1 - O\left(\frac{1}{n}\right)\right).
    \end{align*}
\end{proof}

\section{Runtime Analysis}
\label{sec:runtime}

The main result of this section is the following theorem.

\begin{theorem}\label{thm:runtime}
    The expected runtime until the \gsemod finds a population with an optimal diversity on \oneminmax starting from an almost balanced population is $O(n^2)$ iterations.
\end{theorem}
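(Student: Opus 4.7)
The plan is to lower-bound the probability of reaching the optimum in a single iteration by $\Omega(1/n^2)$ averaged over the run, by combining the pointwise replacement probabilities from Section~\ref{sec:replace-prob} with a phase analysis via Lemma~\ref{lem:phase}. First I would establish a counting invariant: since an almost balanced population has exactly $(n+3)/2$ ones in the hot position and $(n+3)/2$ zeros in the cold position, inclusion--exclusion forces the number of individuals with a one-bit in the hot position and a zero-bit in the cold position (the type treated by Lemmas~\ref{lem:h1c0-options}--\ref{lem:h1c0-prob}) to exceed the number of individuals with the opposite combination (Lemma~\ref{lem:h0c1}) by exactly two. Denote this count by $c$, so $c \ge 2$. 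Each such individual $x_i$ provides a route to the optimum by generating $\tilde x_i$: from $x_i$ itself with probability $\Omega(1/n^3)$ per iteration, and from a Hamming-distance-one neighbor $x_{i\pm 1}$ with probability $\Omega(1/n^2)$, by Lemma~\ref{lem:h1c0-prob}.

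I would then classify every almost balanced population as \emph{easy} if some such $x_i$ has a neighbor $x_{i\pm 1}$ at Hamming distance one from $\tilde x_i$, and \emph{hard} otherwise. In easy states the per-iteration probability of reaching the optimum is $\Omega(1/n^2)$ directly from Lemma~\ref{lem:h1c0-prob}. In hard states, summing the $\Omega(1/n^3)$ contributions over all $c$ qualifying individuals gives a rate $\Omega(c/n^3)$, which is already $\Omega(1/n^2)$ as soon as $c = \Omega(n)$. Hence the only states whose per-iteration rate is worse than $\Omega(1/n^2)$ are hard states with $c = O(1)$.

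The central step is a random-walk argument on small-$c$ hard states via Lemma~\ref{lem:phase}. I would take $\omega_1 = $ ``optimum reached'', $\omega_2 = $ ``an individual with matching bits in the hot and cold positions is replaced, so the hot or cold position moves'', and $\omega_3 = $ ``neither''. When $c = O(1)$ there are $\Omega(n)$ individuals with matching hot/cold bits, and by Lemmas~\ref{lem:h1c1} and~\ref{lem:h0c0} each is replaced with probability $\Omega(1/n^2)$ provided a suitable neighbor condition holds; I would verify this holds for a constant fraction of them by a further counting argument. This yields $p_t + q_t = \Omega(1/n)$ and hence an expected phase length of $O(n)$. On exit, the moving coordinate re-assigns every individual's hot/cold bit pair, and I would argue by a concentration bound that with constant probability the resulting state either is easy or satisfies $c = \Omega(n)$, so that at most $O(n)$ small-$c$ hard phases occur in expectation before a favorable regime is reached.

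Summing the contributions: time spent in easy states or in hard states with $c = \Omega(n)$ is $O(n^2)$ in expectation, since each iteration there succeeds with probability $\Omega(1/n^2)$; time spent in small-$c$ hard phases is at most $O(n)$ phases of expected length $O(n)$, also $O(n^2)$. The main obstacle I anticipate is the concentration step: rigorously showing that $c = \Omega(n)$ with constant probability after the coordinate change requires controlling the joint bit distribution at a fresh coordinate across the $n+1$ fixed-cardinality individuals. This distribution is not i.i.d., but the fitness constraint only fixes the column sums, so conditioning on the specific mutation that triggered the move and applying a variance/second-moment bound over the remaining freedom should give the required $\Omega(1)$ probability.
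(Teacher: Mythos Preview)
Your high-level plan matches the paper's: show that the population frequently reaches states with $|I_{10}| = \Omega(n)$, and that from such states the optimum is found with probability $\Omega(1/n^2)$ per iteration. But two steps in your proposal do not go through as written.

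First, the ``concentration'' step is misconceived. You write that after the hot or cold coordinate moves, ``conditioning on the specific mutation that triggered the move and applying a variance/second-moment bound over the remaining freedom should give the required $\Omega(1)$ probability'' that $c = \Omega(n)$. But the population at that moment is a fixed, deterministic object; the only randomness is which mutation is performed. There is no distributional freedom in the bit columns over which to take a second moment. What you actually need is a \emph{deterministic counting argument}: for the given population, at least a constant fraction of the positions to which the cold (or hot) coordinate could move yield $|I_{10}| = \Omega(n)$, and each such destination is hit with probability $\Omega(1/n^2)$. This is precisely what the paper does via its ``cold-candidate'' positions (Lemmas~\ref{lem:cold-candidates-pos}--\ref{lem:cold-candidates-number}): it shows that at least $n/8$ positions have the required property, giving an aggregate $\Omega(1/n)$ probability of landing in a large-$c$ state.

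Second, your favourable/unfavourable accounting leaks. You bound the total time spent in favourable states by $O(n^2)$ and the length of one small-$c$ excursion by $O(n^2)$, but you only bound the number of small-$c$ phases ``before a favourable regime is reached'', not over the whole run. Each return from a favourable state to a small-$c$ state starts a fresh excursion, and a favourable state can be left long before it produces the optimum: when the sets $J_{00}$, $J_{11}$, $J_{10}$ are large, the hot/cold coordinate moves with probability up to $\Theta(1/n)$ or even $\Theta(1)$ per iteration, while the success probability remains $\Theta(1/n^2)$. Your easy/hard dichotomy does not control the $J$ sets, so you cannot bound how long you remain in a large-$c$ state nor how many times you fall back to small~$c$. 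The paper closes this gap by building the conditions $|J_{\text{hot}}| \le 19$ and $|J_{00}| \le 9$ into the definition of its good State~3, which caps the exit rate (Lemma~\ref{lem:state3}), and by organising phases so that \emph{every} phase, regardless of its starting state, ends at the optimum with probability $\Omega(1/n)$ (Lemmas~\ref{lem:state3}, \ref{lem:state2} and Corollary~\ref{cor:state1-end-of-phase}). That, together with an $O(n)$ expected phase length, gives the $O(n^2)$ bound directly without having to track re-entries.
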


From Section~\ref{sec:replace-prob} we see that the only way to find a population with optimal diversity is to replace individual $x_i$ which has a zero-bit in the cold position and a one-bit in the hot position with $\tilde x_i$, which is different from $x_i$ only in the two almost balanced positions (as defined in Lemma~\ref{lem:h1c0-options}). To show the $O(n^2)$ bound on the runtime of the last optimization step, we aim at proving that we often get to the state of the algorithm when there is a linear number of such individuals which allow us to find the optimal diversity and that we spend enough time in this state before leaving it. The main difficulty in this proof is that we might have individuals which can be replaced with a high probability, which does not allow us to stay in this state for long enough.

To track the population dynamics in the last optimization stage for an almost balanced population $P$ we introduce the following notation.
\begin{itemize}
    
    \item We denote by $I_{01}(P)$ the set of indices $i$ of individuals in $P$ such that $x_i$ has a zero-bit in the hot position and a one-bit in the cold position. By Lemma~\ref{lem:h0c1}, such individuals cannot be replaced.
    
    \item We denote by $I_{00}(P)$ the set of indices $i$ of individuals in $P$ such that $x_i$ has zero-bits in both the cold and the hot positions. By Lemma~\ref{lem:h0c0} replacing these individuals can only move the cold position to another place, but cannot improve the diversity. We also denote by $J_{00}(P)$ the subset of $I_{00}(P)$ of indices $i$ such that $x_i$ is different from $x_{i + 1}$ only in the cold position. By Lemma~\ref{lem:h0c0} the probability to replace such $x_i$ is $\Theta(\frac{i}{n^2})$, which is asymptotically larger than the $O(\frac{1}{n^2})$ probability for the individuals in $I_{00}(P) \setminus J_{00}(P)$ for any $i = \omega(1)$. Informally, this subset should be seen as a subset of indices of individuals which are too easy to replace.
    
    \item We denote by $I_{11}(P)$ the set of indices $i$ of individuals in $P$ such that $x_i$ has one-bits in both the cold and the hot positions. By Lemma~\ref{lem:h1c1} replacing these individuals can only move the hot position to another place, but cannot improve the diversity. We also denote by $J_{11}(P)$ the subset of $I_{11}$ of indices $i$ such that $x_i$ is different from $x_{i - 1}$ only in the hot position. By Lemma~\ref{lem:h1c1} the probability to replace such $x_i$ is $\Theta(\frac{n - i}{n^2})$,  which is asymptotically larger than the $O(\frac{1}{n^2})$ probability for the individuals in $I_{11}(P) \setminus J_{11}(P)$ for any $(n - i) = \omega(1)$. Informally, this subset should be seen as a subset of indices of individuals which are too easy to replace.
    
    \item We denote by $I_{10}(P)$ the set of indices $i$ of individuals in $P$ such that $x_i$ has a one-bit in the hot position and a zero-bit in the cold position. By $J_{10}(P)$ we denote the subset of $I_{10}(P)$ of indices $i$ such that with $S_0$ and $S_1$ defined as in Lemma~\ref{lem:h1c0-options} either $x_{i - 1}$ has exactly one zero-bit in $S_1$ or $x_{i + 1}$ has exactly one one-bit in $S_0$. By Lemma~\ref{lem:h1c0-prob} the probability to replace such $x_i$ is $O(\frac{1}{n})$ and the probability to find the optimal population by replacing it is $\Omega(\frac{1}{n^2})$. Similar to $J_{00}(P)$ and $J_{11}(P)$, this subset should be seen as a subset of indices of individuals which are easy to replace, but they give us a good chance of finding the optimal population.
\end{itemize}

With the introduced notation we summarize the results of Lemmas~\ref{lem:h0c1}-\ref{lem:h1c0-prob} in Table~\ref{tbl:probs}. We also show the relation between the set sizes in the following lemma.

\begin{lemma}
    \label{lem:set-sizes}
    For any almost balanced population $P$ we have $|I_{00}(P)| = |I_{11}(P)|$ and $|I_{10}(P)| = |I_{01}(P)| + 2$. We also have $|I_{00}(P)| \le \frac{n - 1}{2}$ and $|I_{11}(P)| \le \frac{n - 1}{2}$, and $2 \le I_{10}(P) \le \frac{n + 3}{2}$.
\end{lemma}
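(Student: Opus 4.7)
The plan is to prove both the equalities and the bounds by simple counting of ones in the two almost balanced positions together with the total population size. Recall that in an almost balanced population the hot position contains exactly $\frac{n+3}{2}$ one-bits, the cold position contains exactly $\frac{n-1}{2}$ one-bits, and the population has $n+1$ individuals in total (one for each fitness value on the Pareto front).

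First I would write down the three basic identities that follow from these definitions, counting individuals by their (hot, cold) pattern:
\begin{align*}
|I_{10}(P)| + |I_{11}(P)| &= \tfrac{n+3}{2}, \\
|I_{01}(P)| + |I_{11}(P)| &= \tfrac{n-1}{2}, \\
|I_{00}(P)| + |I_{01}(P)| + |I_{10}(P)| + |I_{11}(P)| &= n+1.
\end{align*}
Subtracting the second identity from the first immediately gives $|I_{10}(P)| - |I_{01}(P)| = 2$, which is the second equality of the lemma. Substituting this into the total-count identity, together with the first identity, yields $|I_{00}(P)| = (n+1) - \tfrac{n+3}{2} - |I_{01}(P)| = \tfrac{n-1}{2} - |I_{01}(P)|$, while the second identity gives $|I_{11}(P)| = \tfrac{n-1}{2} - |I_{01}(P)|$, so $|I_{00}(P)| = |I_{11}(P)|$ as claimed.

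For the bounds, I would use non-negativity of the four set sizes. Since $|I_{01}(P)| \ge 0$, the formula $|I_{10}(P)| = |I_{01}(P)| + 2$ gives $|I_{10}(P)| \ge 2$, and the identity $|I_{10}(P)| + |I_{11}(P)| = \tfrac{n+3}{2}$ with $|I_{11}(P)| \ge 0$ gives $|I_{10}(P)| \le \tfrac{n+3}{2}$. The same identity together with $|I_{10}(P)| \ge 2$ gives $|I_{11}(P)| \le \tfrac{n-1}{2}$, and the equality $|I_{00}(P)| = |I_{11}(P)|$ then propagates this bound to $|I_{00}(P)|$.

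There is no real obstacle here: the statement is a book-keeping identity, and the only mild subtlety is remembering that the lower bound $|I_{10}(P)| \ge 2$ comes from the fact that the hot position has exactly two more one-bits than the cold position, which forces at least two individuals with a $1$ in the hot and a $0$ in the cold. (Concretely, $x_0 \in I_{00}(P)$ and $x_n \in I_{11}(P)$ also provide an easy sanity check that $|I_{00}(P)|, |I_{11}(P)| \ge 1$, though this is not needed for the stated bounds.)
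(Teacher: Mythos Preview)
Your proof is correct and follows essentially the same approach as the paper: both derive the equalities and bounds from elementary counting of ones/zeros in the hot and cold positions together with the total population size $n+1$. The only cosmetic difference is that the paper counts zeros in the cold position (obtaining $|I_{10}|+|I_{00}|=\tfrac{n+3}{2}$) to get $|I_{00}|=|I_{11}|$ directly, whereas you count ones in the cold position and use the total-count identity to reach the same conclusion; the subsequent bounds are obtained identically from non-negativity.
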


\begin{proof}
    The number of individuals with a one-bit in the hot position is 
    \begin{align}\label{eq:one-bits-hot-pos}
        |I_{10}(P)| + |I_{11}(P)| = \frac{n + 3}{2}
    \end{align}
    and the number of individuals with a zero bit in the cold position is
    \begin{align*}
        |I_{10}(P)| + |I_{00}(P)| = \frac{n + 3}{2}.
    \end{align*}
    Hence, $|I_{00}(P)| = |I_{11}(P)|$. The number of individuals with a zero-bit in the hot position is 
    \begin{align*}
        |I_{01}(P)| + |I_{00}(P)| = \frac{n - 1}{2}.
    \end{align*}
    Subtracting this equation from~\eqref{eq:one-bits-hot-pos} we obtain
    \begin{align*}
        |I_{10}(P)| - |I_{01}(P)| = 2.
    \end{align*} 
    This implies $|I_{10}(P)| \ge 2$. We also have
    \begin{align*}
        |I_{00}(P)| = |I_{11}(P)|&= \frac{|I_{00}(P)| + |I_{11}(P)|}{2} \le \frac{|P| - |I_{10}(P)|}{2} \\
        &\le \frac{n + 1 - 2}{2} = \frac{n - 1}{2}.
    \end{align*}
    Finally, we have $I_{10}(P) \le \frac{n + 3}{2}$, since there are at most $\frac{n + 3}{2}$ individuals with a one-bit in the hot position.
\end{proof}

\begin{table*}
    \caption{Summary of the bounds on the probabilities proved in Lemmas~\ref{lem:h0c1}-\ref{lem:h0c0} and~\ref{lem:h1c0-prob}. By $i$ we denote the number of one-bits in the individuals for which this probability is estimated.}
    \label{tbl:probs}
    \begin{center}
        \begin{tabular}[t]{lccc}
            \toprule
            Set & $\Pr[\text{replace}] \ge$ & $\Pr[\text{replace}] \le$ & $\Pr[\text{create } \tilde x] \ge$ \\ \midrule
            $I_{01}(P)$ & 0 & 0 & 0 \\
            $I_{00}(P) \setminus J_{00}(P)$ & $\frac{i}{en^3}(1 - O(\frac{1}{n}))$ & $\frac{7}{en^2}(1 + O(\frac{1}{n}))$ & 0 \\[3pt]
            $J_{00}(P)$ & $\frac{i}{en^2}(1 - O(\frac{1}{n}))$ & $\frac{i + 2}{en^2}(1 + O(\frac{1}{n}))$ & 0 \\[3pt]
            $I_{11}(P) \setminus J_{11}(P)$ & $\frac{n - i}{en^3}(1 - O(\frac{1}{n}))$ & $\frac{7}{en^2}(1 + O(\frac{1}{n}))$ & 0 \\[3pt]
            $J_{11}(P)$ & $\frac{n - i}{en^2}(1 - O(\frac{1}{n}))$ & $\frac{n - i + 2}{en^2}(1 + O(\frac{1}{n}))$ & 0 \\[3pt]
            $I_{10}(P) \setminus J_{10}(P)$ & $\frac{1}{2en^2}(1 - O(\frac{1}{n}))$ & $\frac{13}{n^2}$ & $\frac{1 - O(\frac{1}{n})}{en^3}$ \\[3pt]
            $J_{10}(P)$ & $\frac{\min\{n - i, i\} + 1}{en^2}(1 - O(\frac{1}{n}))$ & $\frac{2}{n}(1 + O(\frac{1}{n}))$ & $\frac{1 - O(\frac{1}{n})}{en^2}$ \\ \bottomrule
        \end{tabular}
    \end{center}
\end{table*}

For any almost balanced population $P$ we additionally define $\Jhot(P)$ as the set of indices $i$ such that $x_i$ is different from $x_{i - 1}$ only in the hot position. The following lemma describes the relation between $\Jhot(P)$ and the previously introduced notation.

\begin{lemma}
    \label{lem:jhot}
    Let $P$ be an almost balanced population. Then $\Jhot(P)$ is a superset of $J_{11}(P)$ and it is a subset of $J_{11}(P) \cup J_{10}(P)$.
\end{lemma}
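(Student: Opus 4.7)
The plan is to verify both inclusions directly from the definitions, with the only subtle part being a comparison between $x_{i-1}$ and $\tilde x_i$ in the second inclusion.

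First I would establish $J_{11}(P) \subseteq \Jhot(P)$. This is immediate: every index in $J_{11}(P)$ is by definition an index at which $x_i$ differs from $x_{i-1}$ only in the hot position, which is precisely the membership condition for $\Jhot(P)$.

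For the reverse direction, I would take an arbitrary $i \in \Jhot(P)$ and analyze the two bit positions that matter. Since $x_i$ and $x_{i-1}$ differ only in the hot position and $x_i$ has exactly one more one-bit than $x_{i-1}$, the hot position must carry a one in $x_i$ and a zero in $x_{i-1}$. In the cold position the two strings must agree, so they are either both one or both zero. In the first case $x_i$ lies in $I_{11}(P)$ with the extra property that it differs from $x_{i-1}$ only in the hot position, so $i \in J_{11}(P)$. In the second case $x_i$ has a one in the hot and a zero in the cold position, placing $i \in I_{10}(P)$.

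The main (mild) obstacle is to then show that this second case actually lands inside $J_{10}(P)$, which requires matching the definition that uses $\tilde x_i$, $S_0$, and $S_1$ from Lemma~\ref{lem:h1c0-options}. Here I would compare $x_{i-1}$ with $\tilde x_i$ coordinate by coordinate: $\tilde x_i$ agrees with $x_i$ outside the hot and cold positions, and $x_{i-1}$ also agrees with $x_i$ outside the hot position; in the hot position both $\tilde x_i$ and $x_{i-1}$ carry a zero; in the cold position $\tilde x_i$ carries a one while $x_{i-1}$ carries a zero. Hence $x_{i-1}$ and $\tilde x_i$ differ in exactly one bit, namely the cold position, which belongs to $S_1$ because $\tilde x_i$ has a one there. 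This means $x_{i-1}$ has exactly one zero-bit inside $S_1$, which is one of the two triggering conditions in the definition of $J_{10}(P)$, so $i \in J_{10}(P)$. Combining the two cases gives $\Jhot(P) \subseteq J_{11}(P) \cup J_{10}(P)$, completing the proof.
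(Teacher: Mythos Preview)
Your proof is correct and follows essentially the same approach as the paper: both argue the first inclusion directly from the definition and for the second inclusion split on the value of the cold bit of $x_i$, with the key observation in the zero-cold case being that $x_{i-1}$ differs from $\tilde x_i$ only in the cold position. Your version is slightly more explicit in deriving that $x_i$ has a one in the hot position and in translating the Hamming-distance-one fact into the ``exactly one zero-bit in $S_1$'' phrasing of the definition of $J_{10}(P)$, but the argument is the same.
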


\begin{proof}
    By the definition of $J_{11}(P)$ for all $i \in J_{11}(P)$ we have that $x_i$ is different from $x_{i - 1}$ only in the hot position, thus $i \in \Jhot(P)$. Hence, $\Jhot(P) \supset J_{11}(P)$.
    
    Consider $i \in \Jhot(P)$. There are two possible values of the bit in the cold position of $x_i$. If this bit is a one-bit, then $x_i$ has one-bits in both almost balanced positions and therefore $i$ belongs to $J_{11}(P)$. Otherwise, if we have a zero-bit in the cold position, then $i$ is in $I_{10}(P)$ and $\tilde x_i$ (as it was defined in Lemma~\ref{lem:h1c0-options}) is different from $x_{i - 1}$ only in the cold position. Hence, $i \in J_{10}(P)$.
\end{proof}

We aim at showing that the algorithm is likely to obtain $\Omega(n)$ individuals with index in $I_{10}(P)$ and it stays in this state for long enough to have a good probability to generate an individual which yields the optimal diversity. The individuals with index in $J_{00}(P)$, $J_{11}(P)$ and $J_{10}(P)$, however, increase the probability to change population $P$, which reduces the time which we are expected to spend with $|I_{10}(P)| = \Omega(n)$. The individuals with index in $J_{10}(P)$ also play an opposite (positive) role, since they increase the probability to improve the diversity. Hence the good state of the algorithm, from which we have a good probability to find the optimal diversity and in which we have a not too large probability to leave this state, is when we have small $J_{00}(P)$ and small $J_{11}(P)$ and we have large $I_{10}(P)$. These observations lead us to distinguishing the following states of the algorithm with an almost balanced population $P$.

    \textbf{State 3:} we have $|I_{10}(P)| \ge \frac{n}{32}$, $|\Jhot(P)| \le 19$ and $|J_{00}(P)| \le 9$.
    
    \textbf{State 2:} we are not in State 3 and have $|\Jhot(P)| \le 17$. 
    
    \textbf{State 1:} all other possible situations. 

For $i = 1, 2, 3$ we say that ``the algorithm is in State $i$ at iteration $t$'' or just ``population $P_t$ is in State $i$'' for brevity, if $P_t$ satisfies the conditions of State $i$.

We also split the algorithm run into phases. We later show that each such phase ends in $O(n)$ iterations in expectation and with probability at least $\Omega(\frac{1}{n})$ it ends in the population with optimal diversity. The first phase starts at the first iteration. The current phase ends in the end of iteration $t$ and the a new phase starts at the beginning of iteration $t + 1$ if populations $P_t$ and $P_{t + 1}$ in these iterations are such one of the follwoing conditions is satisfied.
\begin{enumerate}
    \item $P_t$ is in State 1 and $P_{t + 1}$ has an optimal diversity.
    \item $P_t$ is in State 2 and $P_{t + 1}$ is not in State 3 and we either have $|\Jhot(P_{t + 1})| > |\Jhot(P_t)|$ or in iteration $t$ we replace an individual with index $I_{10}(P_t)$ with a different bit string (and probably, $P_{t + 1}$ has an optimal diversity)
    \item $P_t$ is in State 3 and $P_{t + 1}$ is different from $P_t$ (that is, any change in State 3, including finding the optimal diversity, ends the current phase).
\end{enumerate}

From this definition of phases it follows that in the frames of one phase we cannot go from State 2 to State 1 (since increasing $|\Jhot(P_t)|$ ends the phase or moves us to State 3) and we also cannot go from State 3 to any other state (since any change in the population ends the phase). We illustrate the possible transitions during one phase in Figure~\ref{fig:phase}. Note that although this illustration resembles a Markov chain, it is not one, since the transition probabilities can vary in time.

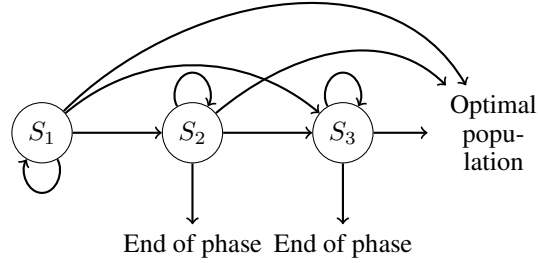
\begin{figure}
    \begin{center}
        \begin{tikzpicture}
            \node (s1) [draw, circle] at (0, 0) {$S_1$};
            \node (s2) [draw, circle] at (2, 0) {$S_2$};
            \node (s3) [draw, circle] at (4, 0) {$S_3$};

            \node (opt) [text width = 1.5cm, align = center] at (6, 0) {Optimal population};

            \node (eop1) at (2, -1.5) {End of phase};
            \node (eop2) at (4, -1.5) {End of phase};

            \draw [->, thick] (s1) to[out=45, in=120] (opt);
            \draw [->, thick] (s1) to[out=40, in=140] (s3);
            \draw [->, thick] (s1) to[out=300, in=0] (0, -0.8) to[out=180, in=240] (s1);
            \draw [->, thick] (s1) -- (s2);
            \draw [->, thick] (s2) to[out=40, in=135] (opt);
            \draw [->, thick] (s2) -- (s3);
            \draw [->, thick] (s2) -- (eop1);
            \draw [->, thick] (s2) to[out=120, in=180] (2, 0.8) to[out=0, in=60] (s2);
            \draw [->, thick] (s3) -- (opt);
            \draw [->, thick] (s3) -- (eop2);
            \draw [->, thick] (s3) to[out=120, in=180] (4, 0.8) to[out=0, in=60] (s3);

        \end{tikzpicture}
    \end{center}
    \caption{Illustration of the possible transitions during one phase. This is not a Markov chain, since the transition probabilities are dependent on multiple factors and can be different in different moments of time.}
    \label{fig:phase}
\end{figure}

\subsection{Analysis of State 3}
\label{sec:state3}

In this section we estimate the time we spend in State $3$, show the possible outcomes and their probabilities. By the definition of State~$3$, any change in the population results in the end of the phase, hence once we are in State $3$, we cannot go to States $1$ and $2$ in the same phase, but we can finish the current phase in a population with optimal diversity. The following lemma estimates the expected time until a change in the population happens and the probability that at that change we find the optimal population.

\begin{lemma}\label{lem:state3}
    Let the population $P_\tau$ of \gsemod at some iteration $\tau$ be in State $3$. Then the expected runtime until the end of the phase is at most $64en + O(1)$ and the probability that we find the optimal population in the end of the phase is at least $\frac{1 - O(\frac{1}{n})}{1156n} = \Omega(\frac{1}{n})$.
\end{lemma}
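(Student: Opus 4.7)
My plan is to apply Lemma~\ref{lem:phase} with $\omega_1$ being the event ``this iteration produces the optimal diversity'', $\omega_2$ being ``this iteration changes the population but not to the optimum'', and $\omega_3$ being ``the population is unchanged''. Since State~3 is defined so that \emph{any} change in $P$ ends the phase, this decomposition lets me read off the expected phase length from $p_t+q_t$ and the success probability from $q_t/p_t$ directly via the lemma.

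For the lower bound $p_t+q_t\ge p$, I restrict attention to $I_{10}(P_t)$. From Table~\ref{tbl:probs}, each $i\in I_{10}(P_t)$ is replaced in one step with probability at least $\frac{1}{2en^2}(1-O(1/n))$: in $I_{10}\setminus J_{10}$ this is exactly the table entry, and in $J_{10}$ the table entry $\frac{\min\{n-i,i\}+1}{en^2}$ is at least $\frac{2}{en^2}$ because $i\in I_{10}$ forces $i\in[1..n-1]$ (a one-bit in the hot position gives $i\ge 1$; a zero-bit in the cold position gives $i\le n-1$). These replacement events are disjoint, because an offspring with $i$ one-bits can only replace $x_i$, so $p_t+q_t\ge |I_{10}(P_t)|\cdot \frac{1}{2en^2}(1-O(1/n))\ge \frac{n/32}{2en^2}(1-O(1/n))=\frac{1}{64en}(1-O(1/n))$. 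Lemma~\ref{lem:phase} then yields $E[T]\le 64en+O(1)$.

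For the ratio $q_t/p_t$, I bound $p_t$ from below by summing the Table~\ref{tbl:probs} lower bounds on $\Pr[\text{create }\tilde x_i]$ over $i\in I_{10}(P_t)$, giving $p_t\ge \frac{|I_{10}(P_t)\setminus J_{10}(P_t)|+n|J_{10}(P_t)|}{en^3}(1-O(1/n))$. The crucial structural point is that every $J_{10}$ element contributes the enhanced amount $\frac{1}{en^2}$ to this sum, matching the enhanced contribution $\frac{2}{n}$ that each $J_{10}$ element makes to $q_t$, so the ratio stays $O(n)$ no matter how large $|J_{10}|$ is. For $q_t$ I sum the Table~\ref{tbl:probs} upper bounds over all six rows using the State~3 constraints $|J_{00}(P_t)|\le 9$ and $|J_{11}(P_t)|\le |\Jhot(P_t)|\le 19$ (the latter by Lemma~\ref{lem:jhot}), together with the bounds $|I_{00}|=|I_{11}|\le (n-1)/2$ and $|I_{10}|\le (n+3)/2$ from Lemma~\ref{lem:set-sizes}; all rows except $J_{10}$ contribute $O(1/n)$ in total, while $J_{10}$ contributes $\frac{2|J_{10}(P_t)|}{n}(1+O(1/n))$. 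The ratio $q_t/p_t$ is worst when $|J_{10}(P_t)|=0$, because the $J_{10}$-growth in $q_t$ is exactly compensated by the $n|J_{10}|$-growth in $p_t$ as noted above; in that worst case the numerator is a constant divided by $n$ (from the aggregated contributions of $I_{00}$, $J_{00}$, $I_{11}$, $J_{11}$, $I_{10}\setminus J_{10}$) while the denominator is $|I_{10}|/(en^3)\ge 1/(32en^2)$, and a careful addition of the explicit constants from Table~\ref{tbl:probs} pins $\alpha\le 1156n-1$. Lemma~\ref{lem:phase} then gives $\Pr[\omega_1]\ge \frac{1}{1+\alpha}= \frac{1-O(1/n)}{1156n}$.

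The main obstacle is the bookkeeping around $J_{10}$: the contribution $\frac{2|J_{10}|}{n}$ to $q_t+p_t$ is an order of magnitude larger than the $O(1/n)$ contributions of the other five rows, so the argument must make the matching $\Omega(|J_{10}|/n^2)$ growth in the $p_t$ lower bound explicit before the ratio is taken, and to land at the specific constant $1156$ rather than a larger one the six row contributions must be summed using the State~3 constraints and Lemma~\ref{lem:set-sizes} rather than loose global bounds.
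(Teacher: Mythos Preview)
Your proposal is correct and follows essentially the same approach as the paper's proof: apply Lemma~\ref{lem:phase} with the same decomposition, lower-bound $p_t+q_t$ via the $|I_{10}(P_t)|\ge n/32$ replacements, lower-bound $p_t$ by the $\tilde x_i$-creation probabilities split over $J_{10}$ and $I_{10}\setminus J_{10}$, and upper-bound the total change probability by summing the Table~\ref{tbl:probs} rows under the State~3 constraints. The paper handles the $J_{10}$-dependence by the single algebraic bound $(2em'n+13em+35n)/(nm'+m)\le 13e+35n/m$ rather than arguing that $m'=0$ is the worst case, but this is the same inequality in a different wrapper; your extra check that the $J_{10}$ table entry also exceeds $\tfrac{1}{2en^2}$ is a small clarification the paper leaves implicit.
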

\begin{proof}
    The proof is based on Lemma~\ref{lem:phase}. W.l.o.g. we assume that we start with iteration $\tau = 1$. Then the iterations starting from $\tau$ can be considered as a sequence of trials $\{X_t\}_{t \in \N}$. For each $t$ the outcomes of $X_t$ are $\omega_1$, when $P_{t + 1}$ has an optimal diversity, $\omega_2$, when $P_{t + 1} \ne P_t$, but its diversity is not optimal, and $\omega_3$, when $P_{t + 1} = P_t$. The phase ends at iteration $T$, which is the first $t$ for which we have $X_T \ne \omega_3$ and the probability that $P_{T + 1}$ has an optimal diversity is the probability that $X_T = \omega_1$.
    
    For all $t \in \N$ we define events $A_t$, $B_t$ and $C_t$ similar to Lemma~\ref{lem:phase}, that is, $A_t$ is when $X_t = \omega_1$, $B_t$ is when $X_t = \omega_2$ and $C_t$ is when all $X_\tau$ with $\tau < t$ are $\omega_3$ (note that $C_t$ is undefined for $t = 1$). 
    Events $A_t$ and $B_t$ depend only on the population $P_t$ in the start of iteration $t$. Event $C_t$ implies that in iteration $t \ge 2$ we have the same population $P_t$ as $P_1$, that is, $P_t$ is in State 3.
    Therefore, the probabilities $p_t$ and $q_t$ of events $A_t$ and $B_t$ conditional on $C_t$ (or unconditional for $t = 1$) are the probabilities of these events conditional on $P_t$ being in State 3.

    We now estimate $p_t + q_t$ and $\frac{q_t}{p_t}$. By Lemma~\ref{lem:h1c0-prob} the probability that we change an individual with index in $I_{10}(P_t)$ is at least $\frac{1}{2en^2}(1 - O(\frac{1}{n}))$. Conditional on $C_t$, we have $|I_{10}(P_t)| \ge \frac{n}{32}$ (since $P_t$ belongs to State 3). Hence, the probability of $A_t \cup B_t$ in this case is at least the probability that we  change one of these individuals, that is,
    \begin{align*}
        p_t + q_t \ge \frac{n}{32} \cdot \frac{1}{2en^2}\left(1 - O\left(\frac{1}{n}\right)\right) = \frac{1}{64en}\left(1 - O\left(\frac{1}{n}\right)\right) \eqqcolon p.
    \end{align*}

    To show an upper bound on $\frac{q_t}{p_t}$ we use
    \begin{align*}
        \frac{q_t}{p_t} \le \frac{q_t + p_t}{p_t} = \frac{\Pr[A_t \cup B_t \mid C_t]}{\Pr[A_t \mid C_t]}.
    \end{align*}

    These probabilities depend only on the population $P_t$, in which by condition $C_t$ we have $|J_{00}(P_t)| \le 9$ and $|J_{11}(P_t)| \le |\Jhot(P_t)| \le 19$ and $|I_{10}(P_t)| \ge \frac{n}{32}$. We denote $m \coloneqq |I_{10}(P_t)|$ and $m'\coloneqq |J_{10}(P_t)|$. By Lemma~\ref{lem:h1c0-prob}, for each individual with index in $J_{10}(P_t)$ the probability to replace it and get the optimal diversity is at least $\frac{1}{en^2}(1 - O(\frac{1}{n}))$ and for each individual in $I_{10}(P_t) \setminus J_{10}(P_t)$ this probability is at least $\frac{1}{en^3}(1 - O(\frac{1}{n}))$. Hence, 
    \begin{align}
        \label{eq:p_t}
        \begin{split}
            p_t &= \Pr[A_t \mid C_t] \ge \frac{m - m'}{en^3}\left(1 - O\left(\frac{1}{n}\right)\right) + \frac{m'}{en^2}\left(1 - O\left(\frac{1}{n}\right)\right) \\&= \left(\frac{m - m'}{en^3} + \frac{m'}{en^2}\right)\left(1 - O\left(\frac{1}{n}\right)\right) \\
            &= \left(\frac{m}{en^3} + \frac{m'}{en^2}\right)\left(1 - O\left(\frac{1}{n}\right)\right) = \frac{1 - O\left(\frac{1}{n}\right)}{en^3} \cdot (nm' + m),
        \end{split}
    \end{align}
    where we hid the $(-\frac{m'}{en^3})$ term inside the $O(\frac{1}{n})$ term.

    Event $A_t \cup B_t$ occurs when (and only when) we replace an individual with a different one. Let $D_t^i$ be the event that we replace individual $i$. Since these events are disjoint, we have \begin{align*}
        \Pr[A_t \cup B_t \mid C_t] = \sum_{i = 0}^n \Pr[D_t^i \mid C_t].
    \end{align*}
    By using our estimates to replace an individual based on its index $i$ obtained in Section~\ref{sec:replace-prob} and summarized in Table~\ref{tbl:probs}, and also the conditions on $P_t$ implied by State 3, we have
    \begin{align*}
        \Pr&[A_t \cup B_t \mid C_t] \le \sum_{i \in J_{10}(P_t)} \frac{2}{n}\left(1 + O\left(\frac{1}{n}\right)\right) + \sum_{i \in I_{10}(P_t) \setminus J_{10}} \frac{13}{n^2}\\
        &+ \sum_{i \in J_{00}(P_t)} \frac{i + 2}{en^2}\left(1 + O\left(\frac{1}{n}\right)\right) + \sum_{i \in I_{00}(P_t) \setminus J_{00}} \frac{7}{en^2}\left(1 + O\left(\frac{1}{n}\right)\right) \\
        &+ \sum_{i \in J_{11}(P_t)} \frac{n - i + 2}{en^2}\left(1 + O\left(\frac{1}{n}\right)\right) + \sum_{i \in I_{11}(P_t) \setminus J_{11}} \frac{7}{en^2}\left(1 + O\left(\frac{1}{n}\right)\right) \\
        &\le |J_{10}(P_t)| \cdot \frac{2}{n}\left(1 + O\left(\frac{1}{n}\right)\right) + |I_{10}(P_t)| \cdot \frac{13}{n^2} \\
        &+ |J_{00}(P_t) \cup J_{11}(P_t)| \cdot \frac{1}{en}\left(1 + O\left(\frac{1}{n}\right)\right) \\
        &+ |I_{00}(P_t) \cup I_{11}(P_t)| \cdot \frac{7}{en^2} \left(1 + O\left(\frac{1}{n}\right)\right) \\
        &\le \left(\frac{2m'}{n} + \frac{13m}{n^2} + \frac{19 + 9}{en} + \frac{7n}{en^2} \right)\left(1 + O\left(\frac{1}{n}\right)\right) \\
        &\le \frac{1 + O\left(\frac{1}{n}\right)}{en^2} \cdot \left(2em'n + 13em + 35n\right).
    \end{align*}

    By~\eqref{eq:p_t}, we obtain
    \begin{align*}
        \frac{q_t}{p_t} &\le \frac{\frac{1 + O\left(\frac{1}{n}\right)}{en^2} \cdot \left(2em'n + 13em + 35n\right)}{\frac{1 - O\left(\frac{1}{n}\right)}{en^3} \cdot (nm' + m)} \\
        &= (n + O(1)) \cdot \frac{2em'n + 13em + 35n}{nm' + m} \le (n + O(1)) \left(13e + \frac{35n}{m}\right) \\
        &\le (n + O(1)) \left(13e + 35 \cdot 32\right) \le 1156n +O(1).
    \end{align*} 

    Therefore, by Lemma~\ref{lem:phase} we have $E[T] \le \frac{1}{p} \le 64en + O(1)$ and the probability that the phase ends in the optimal population, that is, $X_T = \omega_1$, is at least $\frac{1}{1 + 1156n + O(1)} = \frac{1 - O(\frac{1}{n})}{1156n} = \Omega(\frac{1}{n})$. \qedhere

\end{proof}

\subsection{Analysis of State 2}
\label{sec:state2}

We proceed with considering the possible scenarios when we are in State 2. To ease the reading we introduce the following notation. For an almost balanced population $P$ let $I_{H1}(P)$ be the set of indices of individuals in $P$ with a one-bit in the hot position and $I_{H0}(P)$ be a set of indices of individuals with a zero-bit in the hot position. Let also $I_{C1}(P)$ and $I_{C0}(P)$ be the sets of indices of individuals with a one-bit and a zero-bit in the cold position correspondingly. 

For an almost balanced population $P$ we call a balanced position $i$ a \emph{cold-candidate} position, if at least $\frac{n}{16}$ individuals in $P$ with index in $I_{C0}(P)$ have a one-bit in position $i$. The following lemma shows that when $|I_{10}(P)| \le \frac{n}{32}$, then moving the cold position to one of such positions would bring us to State 3 and it is also relatively easy to do.

\begin{lemma}\label{lem:cold-candidates-pos}
    Consider iteration $t$ and assume that $P_t$ is in State 2 and that $|I_{10}(P_t)| \le \frac{n}{32}$. For all $i \in [1..n]$ if position $i$ is a cold-candidate, then the probability to move the cold position to $i$ without moving the hot position in one iteration is at least $\frac{1}{16en^2}$ and after this move we have $|I_{10}(P_{t + 1})| \ge \frac{n}{32}$.
\end{lemma}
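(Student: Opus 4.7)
Denote the current cold and hot positions by $c$ and $h$. The plan is to prove the two claims of the lemma separately: (i) the probability lower bound $\frac{1}{16en^2}$ for the event that in one iteration the cold position moves to $i$ while $h$ remains the hot position, and (ii) the deterministic bound $|I_{10}(P_{t+1})| \ge n/32$ that holds whenever such a move is performed. Both rely on combining the cold-candidate definition with the hypothesis $|I_{10}(P_t)| \le n/32$.

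For claim (i), I would identify the set of parents that deterministically produce the desired offspring: any $x_j$ with $j \in I_{C0}(P_t)$ and $x_j[i] = 1$ has the property that flipping exactly the two bits at $c$ and at $i$ (and no others) yields an offspring $x_j'$ with the same number of ones whose acceptance, by the tie-breaking rule, balances position $c$, turns $i$ into the new cold position, and leaves $h$ untouched (the diversity is preserved since a cold moves between positions). The cold-candidate definition guarantees at least $n/16$ such parents, and standard bit mutation flips exactly two specified bits with probability $\frac{1}{n^2}(1 - 1/n)^{n-2}$. With uniform parent selection from $n+1$ individuals, the overall probability is at least
\[
\frac{n/16}{n+1} \cdot \frac{1}{n^2}\left(1 - \frac{1}{n}\right)^{n-2} \ge \frac{1}{16e(n+1)(n-1)} \ge \frac{1}{16en^2},
\]
where the first inequality uses $(1-1/n)^{n-2} \ge \frac{n}{e(n-1)}$, a consequence of the standard bound $(1-1/n)^{n-1} \ge 1/e$, and the second uses $(n+1)(n-1) < n^2$.

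For claim (ii), I would partition $P_t$ by the bit values at the two positions $h$ and $i$, letting $\alpha, \beta, \gamma, \delta$ be the numbers of individuals with $(h,i)$-values $(1,1), (1,0), (0,1), (0,0)$, respectively. Since $h$ is hot in $P_t$ and $i$ is balanced in $P_t$, we have $\alpha + \beta = (n+3)/2$ and $\alpha + \gamma = (n+1)/2$, which subtract to give the identity $\beta = \gamma + 1$. The cold-candidate condition supplies at least $n/16$ individuals in $I_{C0}(P_t)$ with a one at $i$; since at most $|I_{10}(P_t)| \le n/32$ of them lie in $I_{10}(P_t)$, at least $n/32$ of them lie in $I_{00}(P_t)$ and therefore have a zero at $h$ and a one at $i$, so they contribute to $\gamma$. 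Hence $\gamma \ge n/32$ and $\beta \ge n/32 + 1$. To transfer this bound to $P_{t+1}$, I observe that the replaced parent $x_j$ has $x_j[i] = 1$ and so is not in the $\beta$-set of $P_t$, while the offspring $x_j'$ has $0$ at $i$ and inherits $x_j$'s bit at $h$, so $|I_{10}(P_{t+1})|$ equals $\beta$ plus the indicator that $x_j \in I_{10}(P_t)$; in either case this is at least $\beta \ge n/32 + 1 > n/32$.

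The main obstacle I anticipate is the bookkeeping in the structural step: making sure the identity $\beta = \gamma + 1$ is applied to the correct pair of positions and correctly transferred from $P_t$ to $P_{t+1}$ under the single-individual replacement, and confirming that the cold-candidate definition (which concerns the old cold $c$) yields the right lower bound on $\gamma$ (which is indexed by the candidate position $i$) precisely because the $\le n/32$ individuals in $I_{10}(P_t)$ can absorb at most half of the $n/16$ guaranteed contributors. Once this accounting is pinned down, the probability calculation reduces to a routine application of standard bit mutation estimates.
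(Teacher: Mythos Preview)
Your proposal is correct and, for part (i), essentially identical to the paper's proof: both pick one of the $n/16$ cold-candidate parents, flip exactly the bits at $c$ and $i$, and bound the resulting probability by $\frac{1}{16en^2}$ via the same estimate $(1-1/n)^{n-1}\ge 1/e$.

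For part (ii) you take a slightly different combinatorial route. The paper argues via the \emph{old cold} position: from the $n/16$ ones at $i$ inside $I_{C0}(P_t)$ and the balancedness of $i$ it bounds the number of ones at $i$ inside $I_{C1}(P_t)$ from above, complements to get at least $n/16-1$ individuals in $I_{C1}(P_t)$ with a zero at $i$, and then subtracts $|I_{01}(P_t)|=|I_{10}(P_t)|-2\le n/32-2$ to obtain $\ge n/32$ individuals with $h=1,\,i=0$ that are provably untouched by the replacement. You instead partition directly by the pair $(h,i)$, derive the identity $\beta=\gamma+1$ from the hot/balanced counts, and lower-bound $\gamma$ by splitting the $n/16$ cold-candidate contributors into at most $|I_{10}(P_t)|\le n/32$ with $h=1$ and hence at least $n/32$ with $h=0$. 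Both routes are counting (a subset of) the same set $\beta=\{h=1,i=0\}$; your version is a bit more streamlined because it avoids the detour through $I_{C1}$ and the relation $|I_{01}|=|I_{10}|-2$, at the cost of needing the extra observation that the replaced parent has $i=1$ (hence is not in $\beta$) and that the hot bit is preserved in the offspring, which you handle correctly. The two arguments yield the same bound and are interchangeable.
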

\begin{proof}
    To move the cold position to $i$, we can choose one of the $\frac{n}{16}$ individuals which have a one in position $i$ and zero in the cold position and flip these two bits (and only them) in it. The new individual has the same number of one-bits as its parent (and thus, it is on the same fitness level) and it also makes the cold position balanced and adds an extra zero-bit to position $i$ if it replaces its parent in the population. Thus this replacement does not reduce the diversity and will be accepted by the \gsemod. The hot position stays at the same place in this case.

    The probability to choose one of these individuals is $\frac{n}{16(n + 1)}$ and the probability to flip two particular bits and do not flip the others is $\frac{1}{n^2}(1 - \frac{1}{n})^{n - 2}$. Therefore, the probability that we move the cold position to $i$ and do not move the hot position is at least
    \begin{align*}
        \frac{n}{16(n + 1)} \cdot \frac{1}{n^2}\left(1 - \frac{1}{n}\right)^{n - 2} = \frac{1}{16(n^2 - 1)}\left(1 - \frac{1}{n}\right)^{n - 1} \ge \frac{1}{16en^2}.
    \end{align*}

    Since position $i$ is balanced and there are at least $\frac{n}{16}$ individuals with index in $I_{C0}(P_t)$ which have a one-bit in position $i$, then at most $\frac{n + 1}{2} - \frac{n}{16}$ individuals with index in $I_{C1}(P_t)$ have a one-bit in this position. Since there are $\frac{n - 1}{2}$ one-bits in the cold position, we have $|I_{C1}(P_t)| = \frac{n - 1}{2}$. Therefore, the number of individuals with index in $I_{C1}(P_t)$ and a \emph{zero-bit} in position $i$ is at least $\frac{n - 1}{2} - (\frac{n + 1}{2} - \frac{n}{16}) = \frac{n}{16} - 1$. Some of these individuals might have a zero-bit in the hot position, and then their index is in $I_{01}(P_t)$. By the lemma conditions and by Lemma~\ref{lem:set-sizes} we have $|I_{01}(P_t)| = |I_{10}(P_t)| - 2 \le \frac{n}{32} - 2$. Hence at least
    \begin{align*}
        \left(\frac{n}{126} - 1\right) - \left(\frac{n}{32} - 2\right) \ge \frac{n}{32}
    \end{align*}
    individuals have a one-bit in the hot position and a zero-bit in position $i$. Since we cannot move the cold position to position $i$ by replacing any of these individuals (we need to reduce the number of zero-bits in position $i$, thus the replaced individual must have a one-bit in position $i$), they all are included into  $I_{10}(P_{t + 1})$ after moving the cold position. Therefore, this move gives us a population $P_{t + 1}$ with $|I_{10}(P_{t + 1})| \ge \frac{n}{32}$.
\end{proof}

The following lemma shows that there is always a linear number of cold-candidates in any almost balanced population, independently of the current state.

\begin{lemma}
    \label{lem:cold-candidates-number}
    If $n \ge 5$, then in any almost balanced population there are at least $\frac{n}{8}$ cold-candidate positions.
\end{lemma}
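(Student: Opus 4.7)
The plan is a double-counting argument on the balanced positions. For an almost balanced population $P$ and each balanced position $i$, let $b_i$ denote the number of individuals $x_j$ with $j \in I_{C0}(P)$ that have a one-bit at position $i$; by definition, position $i$ is a cold-candidate iff $b_i \ge n/16$. I will bound $\sum_{i \text{ balanced}} b_i$ from below using the total one-bit mass carried by $I_{C0}(P)$ individuals, and from above using the cold-candidate count, so the two bounds together force the number $K$ of cold-candidate positions to be at least $n/8$.

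For the lower bound, each $x_j$ with $j \in I_{C0}(P)$ has a zero in the cold position, so all $j$ of its one-bits lie either in the hot position or in one of the $n-2$ balanced positions; moreover, $x_j$ has a one in the hot position exactly when $j \in I_{10}(P)$. Summing across $I_{C0}(P)$ gives
\[
   \sum_{i \text{ balanced}} b_i \;=\; \sum_{j \in I_{C0}(P)} j \;-\; |I_{10}(P)|.
\]
By Lemma~\ref{lem:set-sizes}, $|I_{C0}(P)| = |I_{00}(P)| + |I_{10}(P)| = \frac{n+3}{2}$, and $0 \in I_{C0}(P)$ (since $x_0$ is all-zeros) while $n \notin I_{C0}(P)$ (since $x_n$ is all-ones), so $\sum_{j \in I_{C0}(P)} j$ is minimized when $I_{C0}(P) = \{0, 1, \dots, (n+1)/2\}$, giving at least $\frac{(n+1)(n+3)}{8}$. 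Combined with the bound $|I_{10}(P)| \le \frac{n+3}{2}$ from the same lemma, this yields $\sum_i b_i \ge \frac{(n+3)(n-3)}{8}$.

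For the upper bound, each of the $K$ cold-candidate positions contributes at most $|I_{C0}(P)| = \frac{n+3}{2}$, while each of the remaining $n - 2 - K$ balanced positions contributes strictly less than $n/16$. Equating this with the lower bound and regrouping gives $K \cdot \tfrac{7n+24}{16} \ge \tfrac{n^2 + 2n - 18}{16}$, hence $K \ge \tfrac{n^2 + 2n - 18}{7n + 24}$. A short calculation shows the inequality $K \ge n/8$ is equivalent to $n^2 - 8n - 144 \ge 0$, which holds for every odd $n \ge 17$.

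The remaining odd cases $5 \le n \le 15$ require a separate, easier argument: in this range $n/16 < 1$, so a balanced position is a cold-candidate as soon as $b_i \ge 1$. Since position $i$ being balanced means it holds $\frac{n+1}{2}$ one-bits in total while $|I_{C1}(P)| = \frac{n-1}{2}$, pigeonhole forces at least one of these one-bits to come from an $I_{C0}(P)$ individual, so $b_i \ge 1$ at every balanced position. Hence all $n - 2 \ge n/8$ balanced positions are cold-candidates. The main obstacle I foresee is simply keeping the averaging bound tight enough to recover the constant $1/8$; the lower-bound side is slightly subtle because one has to use that $0 \in I_{C0}(P)$ to get a meaningful estimate on $\sum_{j \in I_{C0}(P)} j$, but otherwise everything is routine counting.
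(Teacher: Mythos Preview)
Your proof is correct and follows essentially the same double-counting strategy as the paper: count the total number of one-bits that the $I_{C0}(P)$ individuals place in balanced positions, bound it from below via the minimal possible index set, and from above by splitting balanced positions into cold-candidates and non-cold-candidates. The only differences are bookkeeping: you cap $b_i$ at cold-candidates by $|I_{C0}(P)|=\tfrac{n+3}{2}$ where the paper uses the slightly sharper $\tfrac{n+1}{2}$ (coming from the position being balanced), and you cap non-cold-candidates by $\tfrac{n}{16}$ rather than $\tfrac{n}{16}-1$; these looser constants push your direct argument out to $n\ge 17$, which you then patch for $5\le n\le 15$ with the clean pigeonhole observation that every balanced position already has $b_i\ge 1$. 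The approaches are otherwise identical.
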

\begin{proof}
    Consider an arbitrary almost balanced population $P$. There are $\frac{n + 3}{2}$ individuals with index in $I_{C0}(P)$, and since they all have different fitness and therefore have a different number of one-bits in them, the total number of one-bits in these individuals is at least
    \begin{align*}
        \sum_{i = 0}^{\frac{n + 1}{2}} i = \frac{(n + 1)(n + 3)}{8}.
    \end{align*}
    At most $\frac{n + 3}{2}$ of these ones are in the hot position and none of them are in the cold position. Hence, these individuals have at least
    \begin{align*}
        \frac{(n + 1)(n + 3)}{8} - \frac{n + 3}{2} = \frac{(n + 3)(n + 1 - 4)}{8} = \frac{n^2 - 9}{8}
    \end{align*}
    one-bits in balanced positions. Assume that there are $s < \frac{n}{8}$ cold-candidate positions. Since these positions are balanced, for each of these positions there are at most $\frac{n + 1}{2}$ individuals with index in $I_{C0}(P)$ and a one-bit in this position. For the other $(n - 2 - s)$ positions there are at most at most $\frac{n}{16} - 1$ individuals with index in $I_{C0}(P)$ and a one-bit in it (otherwise they would be cold-candidates). Therefore, by our assumption on $s$, the number of one-bits in individuals with index in $I_{C0}(P)$ is at most
    \begin{align*}
            s\cdot \frac{n + 1}{2} &+ (n - 2 - s) \cdot \frac{n - 16}{16} = s \cdot \frac{7n + 24}{16} + \frac{(n - 2)(n - 16)}{16} \\
            &\le \frac{\left(\frac{n}{8} - 1\right)(7n + 24) + (n - 2)(n - 16)}{16} \\
            &= \frac{\frac{15}{16}n^2 - 3n + 4}{8} = \frac{n^2 - 9}{8} - \frac{\frac{n^2}{16} + 3n - 13}{8}.
    \end{align*}
    The last term is strictly positive, when $n \ge 5$, hence this is strictly less than $\frac{n^2 - 9}{8}$, which is the lower bound on the number of zero-bits in these individuals. Hence, our assumption is wrong, and there are at least $\frac{n}{8}$ cold-candidates. \qedhere
    
\end{proof}

With these two lemmas we are in position to show the lower bound on the probability that $P_{t + 1}$ is in State 3 when we have $P_t$ in State 2 and $|I_{10}(P_t)| \le \frac{n}{32}$. Later we will show that with larger $|I_{10}(P_t)|$ we can rely on finding the optimal population without going to State 3. 

\begin{lemma}\label{lem:state2-to-state3}
    For any iteration $t$ with $P_t$ in State 2, and $|I_{10}(P_t)| \le \frac{n}{32}$ the probability that $P_{t + 1}$ is in State 3 is at least $\frac{1}{256en}$.
\end{lemma}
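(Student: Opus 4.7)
The plan is to show, with probability at least $\frac{1}{256en}$, that a single \gsemod iteration moves the population into State~3. The strategy is to combine Lemma~\ref{lem:cold-candidates-number} (at least $n/8$ cold-candidate positions exist) with Lemma~\ref{lem:cold-candidates-pos} (each cold-candidate $i$ is reached by a specific $2$-bit move with probability at least $\frac{1}{16en^2}$, and this move yields $|I_{10}(P_{t+1})|\ge n/32$). Because distinct cold-candidate destinations produce distinct successor populations, the corresponding moves are pairwise disjoint, so I lower-bound the transition probability by $\frac{1}{16en^2}$ times the number of cold-candidates whose move also satisfies the State~3 constraints on $|\Jhot|$ and $|J_{00}|$.

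The $|\Jhot(P_{t+1})|\le 19$ condition is easy: the hot position is unchanged by a cold-moving step and only the single individual $x_j$ chosen as parent is replaced, so the only pairs $(x_{k-1},x_k)$ whose membership in $\Jhot$ can change are the ones with $k\in\{j,j+1\}$. Hence $|\Jhot(P_{t+1})|\le|\Jhot(P_t)|+2\le 17+2=19$ by the State~2 hypothesis, uniformly over all cold-candidates.

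The delicate requirement is $|J_{00}(P_{t+1})|\le 9$, which depends on the new cold position $i$. I would introduce
\begin{align*}
M_i(P_t) \coloneqq |\{k : x_k \text{ and } x_{k+1} \text{ differ only in position } i \text{ and both have a zero in the hot position of } P_t\}|,
\end{align*}
and observe by the same single-individual-replacement argument that $|J_{00}(P_{t+1})|\le M_i(P_t)+2$, since only the pairs with $k\in\{j-1,j\}$ can change. The essential counting identity is
\begin{align*}
\sum_{i=1}^n M_i(P_t) \le |I_{H0}(P_t)| = \tfrac{n-1}{2},
\end{align*}
because each consecutive pair counted on the left contributes to exactly one $M_i$ (its unique disagreement position) and requires both $k,k+1\in I_{H0}(P_t)$. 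Markov's inequality then limits the number of positions with $M_i(P_t)\ge 8$ to at most $(n-1)/16$, so combined with the $\ge n/8$ cold-candidates of Lemma~\ref{lem:cold-candidates-number}, at least $\frac{n}{8}-\frac{n-1}{16}\ge\frac{n}{16}$ cold-candidates $i$ satisfy $M_i(P_t)\le 7$, and for each of these $|J_{00}(P_{t+1})|\le 9$.

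Each such good cold-candidate contributes at least $\frac{1}{16en^2}$ to the probability of transitioning to State~3, and summing the disjoint contributions gives $\frac{n}{16}\cdot\frac{1}{16en^2}=\frac{1}{256en}$. The main obstacle is the $|J_{00}|$ control: State~2 provides no a~priori bound on how many consecutive pairs differ in a single bit with respect to arbitrary positions, and it is precisely the identity $\sum_i M_i\le|I_{H0}|$ — relying on both endpoints of a counted pair lying in $I_{H0}$ — that lets Markov averaging over cold-candidates close the gap.
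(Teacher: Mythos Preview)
Your proposal is correct and follows essentially the same approach as the paper's proof: combine Lemmas~\ref{lem:cold-candidates-number} and~\ref{lem:cold-candidates-pos}, note that a cold-only move changes $|\Jhot|$ by at most two, and control $|J_{00}(P_{t+1})|$ via the counting identity $\sum_i M_i \le |I_{H0}(P_t)| = \frac{n-1}{2}$ to single out $\ge n/16$ ``good'' cold-candidates. The only cosmetic difference is that the paper sharpens your $|J_{00}(P_{t+1})| \le M_i + 2$ to $M_i + 1$ (observing that the index $j-1$ of the replaced individual's lower neighbor can never enter $J_{00}(P_{t+1})$, since the new $x_j'$ has a zero in the new cold position), and correspondingly uses the threshold $M_i \le 8$ instead of your $M_i \le 7$; both routes yield the same final bound $|J_{00}(P_{t+1})|\le 9$ and the same probability $\frac{1}{256en}$.
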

\begin{proof}    
    By Lemma~\ref{lem:cold-candidates-number} we have at least $\frac{n}{8}$ cold-candidates positions, and since by the lemma conditions we have $|I_{10}(P_t)| \le \frac{n}{32}$, then by Lemma~\ref{lem:cold-candidates-pos} the probability to move the cold positions to each of these positions is at least $\frac{1}{16en^2}$ and it yields $|I_{10}(P_{t + 1})| \ge \frac{n}{32}$. 

    We now show that the move of the cold position (without moving the hot position) cannot make $|\Jhot(P_{t + 1})| > |\Jhot(P_t)| + 2$. Assume that we have moved the cold position by replacing individual $x_i$. Then for all $j \notin \{i, i + 1\}$ we have $j \in \Jhot(P_{t + 1})$ if and only if $j \in \Jhot(P_t)$, since the hot position is at the same place in $P_t$ and $P_{t + 1}$ and both $x_j$ and $x_{j - 1}$ have not changed in iteration $t$.
    Therefore, the move of the cold position can only add two indices to $\Jhot(P_{t + 1})$ (compared to $\Jhot(P_t)$), that are, $i$ and $i + 1$. Since in State~$2$ we have $|\Jhot(P_t)| \le 17$, after the move of the cold position to any of the cold-candidate positions, we will have $|\Jhot(P_{t + 1})| \le 19$.

    Hence, moving the cold position to any of the cold-candidates satisfies $|I_{10}(P_{t + 1})| \ge \frac{n}{32}$ and $|\Jhot(P_{t + 1})| \le 19$. To get $P_{t + 1}$ in State 3 we also need to make $|J_{00}(P_{t + 1})| \le 9$, and now we show that at least $\frac{n}{16}$ cold-candidates allow us to do that.
    
    Consider an event when we move the cold position to position $j$ (which is not necessarily a cold-candidate) by replacing individual $x_i$ with $x_i'$. Consider also some index $k \in I_{H0}(P_t) \supset J_{00}(P_t)$. Note that $I_{H0}(P_{t + 1})$ is equal to $I_{H0}(P_t)$, since we do not move the hot position and we do not change the bit value in it in any individual, including $x_i$. Hence, $J_{00}(P_{t + 1}) \subset I_{H0}(P_t)$. There are two cases, in which $k$ belongs to $J_{00}(P_{t + 1})$. The first case is if $k = i$, then $x_i'$ can be different from $x_{i + 1}$ only in position $j$, then it will be included into $J_{00}(P_{t + 1})$ by the definition of this set. In the second case, if $k \ne i$ and $k \ne i - 1$, then $k \in J_{00}(P_{t + 1})$ only if $x_k$ was different from $x_{k - 1}$ only in position $j$ before the iteration. Note that $k = i - 1$ can never be added to $J_{00}(P_{t + 1})$, since $x_i'$ must have a zero-bit in position $j$ (moving the cold position to $j$ implies that we increase the number of zero-bits in it), and thus $x_i'$ cannot be different from $x_{i - 1}$ only in the new cold position. 

    With this observation, for each position $j \in [1..n]$, except for the hot and the cold positions, we denote by $J_j$ the set of indices $k \in I_{H0}(P_t)$ such that in $P_t$ individual $x_k$ is different from $x_{k + 1}$ only in position $j$. Note that these are disjoint sets for different positions $j$ and they are independent of the index $i$ of the individual we replace. Then for all $j \in [1..n]$ (except the two almost balanced positions) moving the hot position to $j$ will give us $|J_{00}(P_{t + 1})| \le |J_j| + 1$, where we add one to take into account the case when $k = i$, that is, when the index of the changed individual is added to $J_{00}(P_{t + 1})$.

    Since $J_j$ are disjoint subsets of $I_{H0}(P_t)$, then we have $\sum_j |J_j| \le \frac{n - 1}{2}$. If we assume that there are at least $\frac{n}{16}$ positions $j$ that have $|J_j| > 8$, then we have $\sum_j |J_j| > \frac{n}{16} \cdot 8 = \frac{n}{2} > \frac{n - 1}{2}$, hence we must have less than $\frac{n}{16}$ such positions. Consequently, there are at least $\frac{n}{8} - \frac{n}{16} = \frac{n}{16}$ cold-candidates $j$ with $|J_j| \le 8$, moving the cold position to which yields $|J_{00}(P_{t + 1})| \le 8 + 1 = 9$. The probability to move the cold position to any of these $\frac{n}{16}$ cold-candidates is at least 
    \begin{align*}
        \frac{n}{16} \cdot \frac{1}{16en^2} = \frac{1}{256en}. &\qedhere
    \end{align*}
\end{proof}

To bound the probability of finding the optimal population and the probability of ending the phase when we are in State 2 we also need the following auxiliary result.

\begin{lemma}\label{lem:end-phase-state2}
    For any iteration $t$ with $P_t$ in State 2, the probability that either we have $|\Jhot(P_{t + 1})| > |\Jhot(P_t)|$ or in iteration $t$ we change an individual with index in $I_{10}(P_t)$ is at most 
    \begin{align*}
        \frac{15 + 2|J_{10}(P)|}{en} \left(1 + O\left(\frac{1}{n}\right)\right).
    \end{align*}
    The probability that we find the optimal population in one iteration is at least
    \begin{align*}
        \left(\frac{1}{en^2}|J_{10}(P_t)| + \frac{1}{en^3}|I_{10}(P_t) \setminus J_{10}(P_t)|\right) \left(1 - O\left(\frac{1}{n}\right)\right).
    \end{align*}
\end{lemma}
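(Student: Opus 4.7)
The plan is to prove the two claims separately.

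For the lower bound on the probability of finding the optimum in one step, I would use that by Lemma~\ref{lem:h1c0-options} the only way to reach optimality in a single iteration is to generate $\tilde x_i$ for some $i \in I_{10}(P_t)$; for distinct $i$ these events are disjoint (they produce offspring with different numbers of ones). Applying the corresponding rows of Table~\ref{tbl:probs}, each $i \in J_{10}(P_t)$ contributes at least $\frac{1 - O(1/n)}{en^2}$ and each $i \in I_{10}(P_t) \setminus J_{10}(P_t)$ contributes at least $\frac{1 - O(1/n)}{en^3}$; summing the disjoint contributions yields the stated lower bound.

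For the upper bound, write $E_2$ for the event ``some $x_i$ with $i \in I_{10}(P_t)$ is replaced'' and $E_1$ for ``$|\Jhot|$ strictly increases'', and decompose $\Pr[E_1 \cup E_2] \le \Pr[E_2] + \Pr[E_1 \setminus E_2]$. For $\Pr[E_2]$, the $|I_{10}\setminus J_{10}|$ part of Table~\ref{tbl:probs} contributes at most $|I_{10}(P_t)|\cdot \frac{13}{n^2} = O(1/n)$, while the $J_{10}$ part requires a slightly tightened version of Lemma~\ref{lem:h1c0-prob}: retaining the $(1 - 1/n)^{n-1}$ factor that its proof discards in the final union bound shrinks the per-index upper bound from $\frac{2}{n}$ to $\frac{2}{en}(1 + O(1/n))$ (coming from the dominant distance-one mutations from parent $x_{i\pm1}$, which contribute at most $\frac{n - i + 1}{en(n+1)}$ and $\frac{i+1}{en(n+1)}$ respectively), which is enough to give the $\frac{2|J_{10}(P_t)|}{en}(1 + O(1/n))$ summand of the claim.

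For $\Pr[E_1 \setminus E_2]$, under $E_2^c$ the changed individual lies in $I_{00}(P_t)$ or $I_{11}(P_t)$, since changing $I_{01}$ is impossible by Lemma~\ref{lem:h0c1}. The $I_{00}$ case is clean: because the hot position does not move and the replacement $x_{i^*}'$ inherits the zero hot-bit of $x_{i^*} \in I_{00}(P_t)$, the only index that can newly enter $\Jhot$ is $i^*+1$, and this forces both $x_{i^*+1}\in I_{11}(P_t)$ and $x_{i^*}'$ to be the unique bit string agreeing with $x_{i^*+1}$ everywhere outside the hot position. Generating this specific bit string has probability $O(1/(en^2))$ (best realized as a single-bit flip from parent $x_{i^*+1}$), and summing over the at most $|I_{11}(P_t)| \le (n-1)/2$ admissible $i^*$ gives $O(1/(en))$. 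The $I_{11}$ case runs along similar lines but with the added complication that moving the hot position from $p$ to a new position $p'$ redefines $\Jhot(P_{t+1})$ in terms of differences in $p'$; the State~2 bound $|\Jhot(P_t)|\le 17$ constrains which $p'$ are ``dangerous'', and for each admissible pair $(i^*, p')$ the required $x_{i^*}'$ is again a specific bit string of generation probability $O(1/(en^2))$, so careful enumeration gives a further $O(1/(en))$ contribution. Combined with the $O(1/n)$ from $I_{10}\setminus J_{10}$ and the $I_{00}$ contribution, this is absorbed into the $\frac{15}{en}$ constant in the statement.

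The main obstacle is the $I_{11}$ case: moving the hot position can in principle let many previously unrelated adjacent pairs of $P_t$ jump in or out of $\Jhot$ at once, since $\Jhot(P_{t+1})$ is now defined via differences in $p'$ rather than $p$. The plan is to exploit that an increase in $|\Jhot|$ requires $p'$ to be ``rich'' in pairs of adjacent individuals of $P_t$ differing only in $p'$, and that fixing the pair $(i^*, p')$ together with the structure of $P_t$ pins $x_{i^*}'$ down to a small collection of specific bit strings; a careful per-$p'$ accounting, together with the State~2 bound on $|\Jhot(P_t)|$, then keeps the sum of their generation probabilities bounded by $O(1/(en))$.
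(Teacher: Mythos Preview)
Your lower bound and your handling of $\Pr[E_2]$ and of the $I_{00}$ part of $\Pr[E_1\setminus E_2]$ match the paper's proof. The divergence is in the $I_{11}$ case, and here your plan both overcomplicates and does not quite close. The paper does not attempt to track how $\Jhot$ is rebuilt when the hot position moves; it simply bounds $\Pr[A_{11}]$ by the probability of the super-event ``\emph{some} individual with index in $I_{11}(P_t)$ is replaced at all''. The crucial input is $|J_{11}(P_t)|\le|\Jhot(P_t)|\le 17$ (Lemma~\ref{lem:jhot} and the State~2 hypothesis): at most $17$ indices in $I_{11}$ carry an $O(1/n)$ replacement probability, while the remaining $|I_{11}|\le(n-1)/2$ indices each contribute only $O(1/n^2)$ (Table~\ref{tbl:probs}), so the total is $O(1/(en))$ with no analysis of $\Jhot(P_{t+1})$ at all. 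Your per-$p'$ plan, by contrast, leans on the State~2 bound to limit which destinations $p'$ are ``rich''; but that bound is an \emph{upper} bound on $|\Jhot(P_t)|$, and when $|\Jhot(P_t)|$ is small essentially every $p'$ causes an increase, so no useful restriction on the dangerous $(i^*,p')$ pairs follows. If you instead sum over $i^*$ and split into $J_{11}$ versus $I_{11}\setminus J_{11}$, you recover exactly the paper's bound --- but then the detour through $p'$ and the reconstruction of $\Jhot(P_{t+1})$ were never needed.

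On the $J_{10}$ tightening: you are right that the per-index upper bound can be improved from $\tfrac{2}{n}$ to roughly $\tfrac{2}{en}$, but the mechanism is not ``retaining a $(1-1/n)^{n-1}$ factor that the proof discards''. Lemma~\ref{lem:h1c0-prob}'s $\tfrac{2}{n}$ comes from a union bound on a \emph{necessary} event (some designated bit flips), which never carried that factor; to tighten it you must instead sum exact mutation probabilities over all valid targets (at distances $1$ and $3$ from $x_{i\pm1}$). Incidentally, the paper's own proof uses only the looser $\tfrac{2}{n}$, so its final displayed inequality is off by a factor of $e$ in the $|J_{10}|$-term; this does no harm downstream, since Lemma~\ref{lem:state2} applies only the weaker bound $\tfrac{12}{n}+\tfrac{2|J_{10}|}{n}$.
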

\begin{proof}
    The lower bound on the probability to find the optimal population follows from Lemma~\ref{lem:h1c0-prob}. Since for any individual with index in $J_{10}(P_t)$ the probability to replace it and obtain the optimal diversity is at least $\frac{1 - O(\frac{1}{n})}{en^2}$ and for any individual with index in $I_{10}(P) \setminus J_{10}(P_t)$ this probability is at least $\frac{1 - O(\frac{1}{n})}{en^3}$ and since these are disjoint events for different individuals, we have that this probability is at least
    \begin{align*}
        \left(\frac{1}{en^2}|J_{10}(P_t)| + \frac{1}{en^3}|I_{10}(P_t) \setminus J_{10}(P_t)|\right) \left(1 - O\left(\frac{1}{n}\right)\right).
    \end{align*}

    Let $A$ be the event when either we replace an individual with index in $|I_{10}(P_t)|$ or we get $|\Jhot(P_{t + 1})| > |\Jhot(P_t)|$. Event $A$ it can be represented as a union of three disjoint events $A = A_{00} \cup A_{11} \cup A_{10}$, where $A_{10}$ is the event when we replace an individual with index in $I_{10}(P_t)$ and $A_{00}, A_{11}$ are the events when we replace an individual with index in $I_{00}(P_t)$ or $I_{11}(P_t)$ correspondingly and get $|\Jhot(P_{t + 1})| > |\Jhot(P_t)|$. Note that by Lemma~\ref{lem:h0c1} we cannot replace an individual with index in $I_{01}(P_t)$, hence we do not consider this as a part of event $A$. Therefore, we have $\Pr[A] = \Pr[A_{00}] + \Pr[A_{11}] + \Pr[A_{10}]$. We estimate each of the three probabilities separately.

    Replacing an individual with index in $I_{11}(P_t)$ is a super-event of $A_{11}$, hence its probability is not smaller than $\Pr[A_{11}]$.
    By Lemma~\ref{lem:h1c1}, the probability to replace an individual with index in $I_{11}(P_t)$ is at most $\frac{1}{en}(1 + O(\frac{1}{n}))$, if its index is in $J_{11}(P_t)$ and it is at most $\frac{7}{en^2}(1 + O(\frac{1}{n}))$ otherwise. Since in State 2 we have $|J_{11}(P_t)| \le |\Jhot(P_t)| \le 17$ and since by Lemma~\ref{lem:set-sizes} we always have $|I_{11}(P_t)| \le \frac{n - 1}{2}$, then we have
    \begin{align*}
        \Pr[A_{11}] &\le \frac{(n - 1)}{2} \cdot \frac{7}{en^2}\left(1 + O\left(\frac{1}{n}\right)\right) + 17 \cdot \frac{1}{en}\left(1 + O\left(\frac{1}{n}\right)\right) \\
        &\le \frac{41}{2en}\left(1 + O\left(\frac{1}{n}\right)\right).
    \end{align*}

    Similarly, by Lemma~\ref{lem:h1c0-prob}, the probability to replace an individual with index in $I_{10}(P_t)$ is at most $\frac{13}{n^2}$, if its index is not in $J_{10}(P_t)$ and it is at most $\frac{2}{n}$ otherwise. Since $|I_{10}(P_t)| \le \frac{n + 3}{2}$, the probability of $A_{10}$ is at most
    \begin{align*}
        \Pr[A_{10}] &\le |I_{10}(P_t) \setminus J_{10}(P_t)| \cdot \frac{13}{n^2} + |J_{10}(P_t)| \cdot \frac{2}{n}\left(1 + O\left(\frac{1}{n}\right)\right) \\
        &\le \frac{13(n + 3)}{2n^2} + |J_{10}(P_t)| \cdot \frac{2}{n}\left(1 + O\left(\frac{1}{n}\right)\right) \\
        &= \left(\frac{13}{2n} + |J_{10}(P_t)| \cdot \frac{2}{n}\right)\left(1 + O\left(\frac{1}{n}\right)\right).
    \end{align*}

    To estimate the probability of $A_{00}$, we use a more rigorous approach. First we note, that by replacing an individual with index in $I_{00}(P_t)$ we do not move the hot position by Lemma~\ref{lem:h0c0}. Hence, similar to our argument in Lemma~\ref{lem:state2-to-state3}, the only two indices which can be added to $\Jhot(P_{t + 1})$ which are not in $\Jhot(P_t)$ are the index of the changed individual $i$ or the index of its neighbour $i + 1$. However, there is a guarantee that $i$ in this case cannot be included into $\Jhot(P_{t + 1})$, since it is in $I_{00}(P_t)$ and even after replacement new $x_i$ has a zero-bit in the hot position. Hence, it will be different from $x_{i - 1}$ not only in the hot position and therefore, it is not in $\Jhot(P_{t + 1})$.

    To replace $x_i$ with a bit string which is different from $x_{i + 1}$ only in the hot position we must generate a particular bit string $x_i'$ in level $i$. For all $j \ne i$ the Hamming distance from this bit string to $x_j$ is at least $|i - j|$ and the distance to $x_i$ is at least 2. Hence, by Lemma~\ref{lem:hamming}, the total probability over all parents we can choose (each with probability $\frac{1}{n + 1}$) that we generate $x_i'$ is at most 

    \begin{align*}
        \sum_{j \ne i} \frac{1}{n + 1} &\cdot \frac{1}{e(n - 1)^{|j - i|}} + \frac{1}{n + 1} \cdot \frac{1}{e(n - 1)^2} \\
        &\le \frac{2}{e(n + 1)} \left(\frac{1}{(n - 1)^2} +  \sum_{d = 1}^n \frac{1}{(n - 1)^d} \right) \\
        &\le \frac{2}{e(n + 1)} \left(\frac{1}{(n - 1)^2} +  \frac{1}{n - 1} \cdot \frac{1}{1 - \frac{1}{n - 1}} \right) \\
        &= \frac{2}{en^2}\left(1 + O\left(\frac{1}{n}\right)\right).
    \end{align*}

    By the union bound over all $i \in I_{00}(P_t)$, the probability that we replace any individual $x_i$ with a bit string different from $x_{i + 1}$ only in the hot position is at most
    \begin{align*}
        |I_{00}(P)_t| \frac{2}{en^2}\left(1 + O\left(\frac{1}{n}\right)\right) &\le \frac{n - 1}{2} \cdot \frac{2}{en^2}\left(1 + O\left(\frac{1}{n}\right)\right) \\
        &\le \frac{1}{en} \left(1 + O\left(\frac{1}{n}\right)\right),
    \end{align*} 
    which is an upper bound on $\Pr[A_{00}]$.

    Summing up the probabilities of $A_{00}$, $A_{11}$ and $A_{10}$, we have that the probability of $A$ is at most 
    \begin{align*}
        \Pr[A] &\le \frac{41}{2en}\left(1 + O\left(\frac{1}{n}\right)\right) + \left(\frac{13}{2n} + |J_{10}(P_t)| \cdot \frac{2}{n}\right)\left(1 + O\left(\frac{1}{n}\right)\right) \\
        & + \frac{1}{en} \left(1 + O\left(\frac{1}{n}\right)\right) 
        = \left(\frac{43 + 13 e}{2en} + |J_{10}(P)| \cdot \frac{2}{n}\right)\left(1 + O\left(\frac{1}{n}\right)\right)  \\
        &\le \frac{15 + 2|J_{10}(P)|}{en} \left(1 + O\left(\frac{1}{n}\right)\right),
    \end{align*}
    since $\frac{43 + 13e}{2e} \approx 14.409$.
\end{proof}

We are now in position to prove the main result of this subsection.

\begin{lemma}\label{lem:state2}
    For any iteration $t$ with  $P_t$ in State 2 the expected time until the end of the current phase is at most $320en + O(1)$ iterations. The probability that at the end of the phase we have a population with optimal diversity at least $\frac{1}{3551232en + O(1)} = \Omega(\frac{1}{n})$.    
\end{lemma}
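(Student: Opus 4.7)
The plan mirrors Section~\ref{sec:state3}: I split the phase into a State 2 portion of length $T_1$ and an (optional) State 3 portion following it, applying Lemma~\ref{lem:phase} to the State 2 iterations and then composing with Lemma~\ref{lem:state3} for what happens afterwards.

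For the expected-time bound, I first show that in every iteration while $P_t$ is in State 2, the probability that $P_{t+1}$ leaves State 2 is at least $\frac{1}{256en}(1-O(1/n))$. This comes from a case split on $|I_{10}(P_t)|$: if $|I_{10}(P_t)| \ge n/32$, Lemma~\ref{lem:h1c0-prob} yields probability at least $\frac{|I_{10}(P_t)|}{2en^2}(1-O(1/n)) \ge \frac{1}{64en}(1-O(1/n))$ of replacing some individual with index in $I_{10}(P_t)$, and by the phase-ending rule every such replacement either ends the phase or makes $P_{t+1}$ lie in State 3; if $|I_{10}(P_t)| < n/32$, Lemma~\ref{lem:state2-to-state3} directly gives probability at least $\frac{1}{256en}$ of reaching State 3. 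Lemma~\ref{lem:phase} then bounds $E[T_1] \le 256en + O(1)$, and Lemma~\ref{lem:state3} adds at most $64en + O(1)$ expected iterations in the (optional) State 3 portion, giving the total $320en + O(1)$.

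For the success probability I bound it through the State 3 route: $\Pr[\text{opt}] \ge \Pr[\text{enter State 3 during this phase}] \cdot \Pr[\text{opt from State 3}]$, where the second factor is at least $\frac{1-O(1/n)}{1156n}$ by Lemma~\ref{lem:state3}. For the first factor I apply Lemma~\ref{lem:phase} a second time to the State 2 iterations, with $\omega_1$ being ``$P_{t+1}$ is in State 3'', $\omega_2$ being ``phase ends via rule 2'', and $\omega_3$ being ``stay in State 2''. The numerator $q_t$ is upper-bounded by Lemma~\ref{lem:end-phase-state2} by $\frac{15 + 2|J_{10}(P_t)|}{en}(1+O(1/n))$, while $p_t$ is lower-bounded by Lemma~\ref{lem:state2-to-state3} when $|I_{10}(P_t)| < n/32$, and by a parallel cold-candidate argument when $|I_{10}(P_t)| \ge n/32$ (see below). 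Careful accounting of the worst-case ratio yields $q_t/p_t \le 3072e - 1$, so that $\Pr[\omega_1 \text{ at } T_1] \ge \frac{1}{3072e}(1-O(1/n))$, and multiplication by $\frac{1-O(1/n)}{1156n}$ produces the claimed $\frac{1}{3551232en + O(1)}$.

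The main obstacle is the case $|I_{10}(P_t)| \ge n/32$ in the success-probability analysis: Lemma~\ref{lem:state2-to-state3} is not directly applicable, so I must establish an analogous $\Omega(1/n)$ lower bound on $\Pr[P_{t+1} \in \text{State 3}]$ in this regime. The key observation is that in this case the State 2 definition forces $|J_{00}(P_t)| \ge 10$, and the cold-candidate counting argument of Lemmas~\ref{lem:cold-candidates-pos}--\ref{lem:state2-to-state3} can be adapted to produce a linear number of cold-candidate positions whose selection simultaneously preserves $|I_{10}| \ge n/32$, keeps $|\Jhot| \le 19$ after the move, and reduces $|J_{00}|$ to at most $9$, so that the resulting $P_{t+1}$ lies in State 3 with probability $\Omega(1/n)$ and the uniform ratio bound above holds throughout.
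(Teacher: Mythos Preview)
Your expected-time argument matches the paper's: the same case split on $|I_{10}(P_t)|$ gives $p_t+q_t\ge\frac{1}{256en}$, and Lemma~\ref{lem:state3} supplies the extra $64en+O(1)$ once State~3 is reached.

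The success-probability argument, however, has a genuine gap. You take $\omega_1=\{P_{t+1}\in\text{State 3}\}$ and aim for a \emph{constant} bound $q_t/p_t\le 3072e-1$. This fails when $|I_{10}(P_t)|\ge n/32$ and $|J_{10}(P_t)|$ is large (nothing prevents $|J_{10}(P_t)|=\Theta(n)$). By Lemma~\ref{lem:end-phase-state2} the phase-ending probability satisfies $q_t\ge\Omega(|J_{10}(P_t)|/n)$, since each of the $|J_{10}(P_t)|$ individuals is replaced with probability $\Omega(1/n)$. On the other hand, any cold-candidate argument produces $p_t=\Pr[P_{t+1}\in\text{State 3}]$ of order only $\Theta(1/n)$: it relies on a specific two-bit flip in one of $\Theta(n)$ individuals, and this does not scale with $|J_{10}(P_t)|$. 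Hence $q_t/p_t$ can be $\Theta(|J_{10}(P_t)|)$, possibly $\Theta(n)$, and your uniform constant bound cannot hold. Your claim that ``the State~2 definition forces $|J_{00}(P_t)|\ge 10$'' is correct but does not rescue the ratio.

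The paper avoids this by defining $\omega_1$ differently: it is the event that we reach the optimum \emph{either} directly in this iteration \emph{or} via State~3 afterwards. In the problematic regime ($|I_{10}|\ge n/32$ or $|J_{10}|\ge 1$) the paper compares $\peop$ not to $\ps$ but to $\popt$, the probability of finding the optimum in this very iteration, using the second bound of Lemma~\ref{lem:end-phase-state2}: $\popt\ge(|J_{10}|/(en^2)+|I_{10}\setminus J_{10}|/(en^3))(1-O(1/n))$. Both $\peop$ and $\popt$ then scale linearly with $|J_{10}|$, so their ratio is $O(n)$ uniformly. Only in the complementary case $|I_{10}|<n/32$ and $|J_{10}|=0$ does the paper route through $\ps$ via Lemma~\ref{lem:state2-to-state3}, and there $\peop\le O(1/n)$ is already small. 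To repair your proof you must either adopt this two-case split (direct-to-optimum versus via-State-3) or produce a lower bound on $\Pr[P_{t+1}\in\text{State 3}]$ that grows with $|J_{10}(P_t)|$, which the cold-candidate machinery does not give.
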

\begin{proof}
    Similar to the proof of Lemma~\ref{lem:state3}, we aim at applying Lemma~\ref{lem:phase}. W.l.o.g. we assume that we start at iteration $t = 1$ with $P_1$ in State 2. We consider the sequence of algorithm's iterations as a sequence $\{X_t\}_{t \in \N}$ of trials. Each trial $X_t$ has three possible outcomes. The first outcome $\omega_1$ occurs either when we get $P_{t + 1}$ with optimal diversity or when we get $P_{t + 1}$ in State 3 and the phase ens in a population with optimal diversity. In the latter case we consider all iterations spent in State 3 as an auxiliary trial not belonging to the sequence $\{X_t\}_{t \in \N}$. The second outcome $\omega_2$ occurs when we either end the phase without getting $P_{t+ 1}$ with optimal diversity or when we get $P_{t + 1}$ in State 3 and then end a phase without finding the optimal population (in this case we also consider all iterations spent in State 3 as an auxiliary trial). The last outcome $\omega_3$ occurs when neither the phase ends, nor we get $P_{t + 1}$ in State $3$.

    For all $t \in \N$ we define events $A_t$ and $B_t$ as $X_t = \omega_1$ and $X_t = \omega_2$ correspondingly and we define $C_t$ as an event when for all $\tau \in [1..t - 1]$ we have $X_t \in \tau$ (this is undefined for $t = 1$). Then if $P_1$ is in State 2 and for some $t \ge 2$ event $C_t$ occurs, then it means that we have $P_t$ in State 2 as well, since we do not leave State 2 and we also do not end the phase in any iteration $\tau \in [1..t - 1]$.

    We define $p_t = \Pr[A_t \mid C_t]$ and $q_t = \Pr[B_t \mid C_t]$. To apply Lemma~\ref{lem:phase}, we need to estimate $p_t + q_t$ and $\frac{q_t}{p_t}$.

    We first estimate $p_t + q_t$. Since events $A_t$ and $B_t$ are disjoint, this is $\Pr[A_t \cup B_t \mid C_t]$. If we have $|I_{10}(P_t)| \le \frac{n}{32}$, then by Lemma~\ref{lem:state2-to-state3} the probability to go to State 3, which is a sub-event of $A_t \cup B_t$, is at least $\frac{1}{256en}$. Thus, we have $\Pr[A_t \cup B_t \mid C_t] \ge \frac{1}{256en}$. Otherwise, if we have $|I_{10}(P_t)| > \frac{n}{32}$, then by Lemma~\ref{lem:h1c0-prob} the probability to replace an individual with index in $|I_{10}(P_t)|$ is at least $\frac{1}{2en^2}(1 - O(\frac{1}{n}))$. Replacing one of 
    at least $\frac{n}{32}$ such individuals is a sub-event of $A_t \cup B_t$, and its probability is at least
    \begin{align*}
        \frac{n}{32} \cdot \frac{1}{2en^2} \left(1 - O\left(\frac{1}{n}\right)\right) = \frac{1}{64en}\left(1 - O\left(\frac{1}{n}\right)\right) > \frac{1}{256en},
    \end{align*}
    if $n$ is large enough. Hence, independently on $|I_{10}(P_t)|$ we always have $\Pr[A_t \cup B_t \mid C_t] \ge \frac{1}{256en} \eqqcolon p$.

    To find an upper bound on $\frac{q_t}{p_t}$ we fix an arbitrary $t$ and split the event $A_t \cup B_t$ into the following three disjoint sub-events (we avoid $t$ in the notation, since $t$ is fixed).
    Let $\Dopt$ be the event when $P_{t + 1}$ has the optimal diversity, $\Ds$ be the event when $P_{t + 1}$ is in State 3 and let $\Deop$ be the event that we end the phase, but $P_{t + 1}$ is not in State 3 and it is not optimally diverse. 
    We denote the probabilities of these events $\popt$, $\ps$ and $\peop$ respectively. If $\Dopt$ occurs, then iteration $t$ results in $X_t = \omega_1$. If $\Deop$ occurs, then $X_t = \omega_2$. If $\Ds$ occurs, then by Lemma~\ref{lem:state3} with probability at least $\frac{1 - O(\frac{1}{n})}{1156n}$ we have $X_t = \omega_1$ and otherwise we have $X_t = \omega_2$.

    We denote by $q \coloneqq \frac{1 - O(\frac{1}{n})}{1156n}$ the lower bound on the probability of $\omega_1$ conditional on $\Ds$. Then we have
    \begin{align}\label{eq:P-state2}
        \frac{q_t}{p_t} = \frac{q_t + p_t}{p_t} - 1 \le \frac{\popt + \ps + \peop}{\popt + \ps q} - 1 = \frac{\peop + \ps(1 - q)}{\popt + \ps q}.
    \end{align}
    We consider two cases depending on population $P_t$.

    \textbf{Case 1:} we have either $|I_{10}(P_t)| \ge \frac{n}{32}$ or $|J_{10}(P_t)| \ge 1$. If $\Deop$ occurs, then we either have $|\Jhot(P_{t + 1})| > |\Jhot(P_t)|$ or we have changed an individual with index in $|I_{10}(P_t)|$. Similar to the notation used in Lemma~\ref{lem:state3}, we denote $m \coloneqq |I_{10}(P_t)|$ and $m' \coloneqq |J_{10}(P_t)|$. Then by Lemma~\ref{lem:end-phase-state2} we have 
    \begin{align*}
        \frac\peop\popt &\le \frac{\frac{12}{n} + \frac{2}{n}m'}{\left(\frac{m - m'}{en^3} + \frac{m'}{en^2}\right)\left(1 - O\left(\frac{1}{n}\right)\right)} \\
        &= \frac{en^2}{n} \cdot \frac{(12 + 2m')}{(\frac{m}{n} + m')} \left(1 + O\left(\frac{1}{n}\right)\right) \\
        &= en \cdot \frac{(12 + 2m')}{(\frac{m}{n} + m')} \left(1 + O\left(\frac{1}{n}\right)\right),
    \end{align*}
    where we hid the $\frac{-m'}{en^3}$ term in the denominator in the first line into the $O(\frac{1}{n})$ term. If we have $m \ge \frac{n}{32}$, then this is at most
    \begin{align*}
        en \cdot \frac{12 + 2m'}{\frac{1}{32} + m'}\left(1 + O\left(\frac{1}{n}\right)\right) &= 2en\left(1 + \frac{6 - \frac{1}{32}}{\frac{1}{32} + m'}\right)\left(1 + O\left(\frac{1}{n}\right)\right) \\
        &\le 386en + O(1).
    \end{align*}
    If we have $m' \ge 1$, then this is at most 
    \begin{align*}
        en \cdot \left(\frac{12}{m'} + 2\right) \left(1 + O\left(\frac{1}{n}\right)\right) \le 14en + O(1).
    \end{align*}
    In both cases this is at most $386en + O(1) \approx 1049n$, which is smaller than $\frac{1 - q}{q} = \frac{1156n(1 - \Theta(\frac{1}{n}))}{1 - O(\frac{1}{n})} = 1156n \pm O(1)$, when $n$ is large enough. Hence, we have $\peop \le \frac{1 - q}{q} \popt$, and by~\eqref{eq:P-state2} we obtain
    \begin{align*}
        \frac{q_t}{p_t} \le \frac{\popt \cdot \frac{1 - q}{q} + \ps (1 - q)}{\popt + \ps q} = \frac{1 - q}{q} \le 1156n + O(1).
    \end{align*}

    \textbf{Case 2:} $|I_{10}(P)| < \frac{n}{32}$ and $|J_{10}(P)| = 0$. By Lemma~\ref{lem:state2-to-state3} we have $\ps \ge \frac{1}{256en}$. By Lemma~\ref{lem:end-phase-state2} we have $\peop \le \frac{12}{n} \le 12 \cdot 256e\ps$. Hence, by~\eqref{eq:P-state2} we have

    \begin{align*}
        \frac{q_t}{p_t} &\le \frac{3072e\ps + (1 - q)\ps}{q \ps} \le \frac{3072e + 1}{q} \\
        &= (3072e + 1)(1156n +O(1)) = 3551232en + O(1),
    \end{align*}
    which is a larger bound than in case 1. Hence, in both cases we have $\frac{q_t}{p_t} \le 3551232en + O(1) \eqqcolon \alpha$. 
    
    Consequently, by Lemma~\ref{lem:phase} the probability that the current phase ends in the optimal population is at least $\frac{1}{1 + \alpha} = \frac{1}{3551232en + O(1)}$ and that the expected time until event $A_t \cup B_t$ occurs is at most $\frac{1}{p} \le 256en$. When $A_t \cup B_t$ occurs, then we either end the phase immediately or we go to State 3, where by Lemma~\ref{lem:state3} we need in expectation another $64en + O(1)$ iterations to end the phase. Therefore, the expected time until the phase ends is at most $320en + O(1)$. \qedhere
\end{proof}

\subsection{Analysis of State 1}

In this section we show, that we spend in State $1$ expected number of $O(n)$ iteration before we leave it to get either to another state or to the optimal population. For this it is enough to show the $\Omega(\frac{1}{n})$ probability to leave State 1, which we do in the following lemma.

\begin{lemma}\label{lem:state1-to-state2}
    Let $n \ge 5$. If at iteration $t$ we have $P_t$ in State 1, then the probability that $|\Jhot(P_{t + 1})| \le 17$ is at least $\frac{1}{256en}$.
\end{lemma}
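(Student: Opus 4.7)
The plan is to mimic the strategy of Lemma~\ref{lem:state2-to-state3}, but move the \emph{hot} position instead of the cold one, in order to reset $\Jhot$ to a small set. I define, analogously to cold-candidates, a balanced position $j$ to be a \emph{hot-candidate} if at least $\frac{n}{16}$ individuals with index in $I_{H1}(P_t)$ have a zero-bit in $j$. A symmetric argument to Lemma~\ref{lem:cold-candidates-number}---using that the $\frac{n+3}{2}$ individuals indexed by $I_{H1}(P_t)$ have pairwise distinct numbers of zero-bits and therefore carry at least $\frac{(n+1)(n+3)}{8}$ zeros in total, at most $\frac{n+3}{2}$ of which can lie in the cold position---yields at least $\frac{n}{8}$ hot-candidates whenever $n \ge 5$.

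Next, for any hot-candidate $j$ I show that the probability of moving the hot position to $j$ in one iteration (without touching the cold position) is at least $\frac{1}{16en^2}$. The argument is identical to Lemma~\ref{lem:cold-candidates-pos}: pick one of the $\ge \frac{n}{16}$ individuals indexed by $I_{H1}(P_t)$ that has a zero in $j$ (probability $\ge \frac{n/16}{n+1}$) and flip only the bits in the old hot position and in $j$ (probability $\frac{1}{n^2}(1-\frac{1}{n})^{n-2}$). The offspring lies on the same fitness level as its parent, rebalances the old hot position, and turns $j$ into the new almost-balanced hot position; since the number of almost-balanced positions remains exactly two, the diversity is unchanged by Lemma~\ref{lem:hamming-alternative}, and the \gsemod accepts the replacement.

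To bound $|\Jhot(P_{t+1})|$ after such a move, I introduce, for each $j \in [1..n]$, the set $K_j$ of indices $i$ such that in $P_t$ the bit strings $x_i$ and $x_{i-1}$ differ in exactly one position and that position is $j$. These sets are pairwise disjoint, and since only $n$ indices $i \in [1..n]$ exist, $\sum_j |K_j| \le n$; hence at most $\lfloor n/16 \rfloor$ positions satisfy $|K_j| \ge 16$. Combined with the lower bound of $\frac{n}{8}$ on the number of hot-candidates, at least $\frac{n}{8}-\frac{n}{16}=\frac{n}{16}$ hot-candidates also satisfy $|K_j| \le 15$. For any such $j$, after moving the hot position by replacing some $x_i$, every index $k \notin \{i, i+1\}$ has its pair $(x_{k-1},x_k)$ unchanged, so $k \in \Jhot(P_{t+1})$ iff $k \in K_j$; while the two special indices $i$ and $i+1$ contribute at most $2$. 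Therefore $|\Jhot(P_{t+1})| \le |K_j|+2 \le 17$.

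Putting the three parts together, the total probability that in iteration $t$ we move the hot position to one of the at least $\frac{n}{16}$ ``good'' hot-candidates is at least $\frac{n}{16}\cdot\frac{1}{16en^2}=\frac{1}{256en}$, as claimed. The main obstacle will be the third step: verifying cleanly that after the swap, the only indices that can appear in $\Jhot(P_{t+1})$ but not in $K_j$ are $i$ and $i+1$, which requires carefully unrolling the dependence of consecutive individuals' pairwise differences on the single replaced element.
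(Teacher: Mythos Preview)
Your proposal is correct and follows essentially the same route as the paper: define hot-candidates, count them via the zero-bit pigeonhole argument, bound the single-iteration probability of moving the hot position to a fixed hot-candidate by $\frac{1}{16en^2}$, and then use the disjointness of the sets $K_j$ to ensure that at least $\frac{n}{16}$ hot-candidates yield $|\Jhot(P_{t+1})|\le 17$. The only noticeable difference is in the last step: the paper observes that the index $i+1$ can \emph{never} enter $\Jhot(P_{t+1})$ after such a move (since the new $x_i'$ has a one-bit in the new hot position $j$, so it cannot differ from $x_{i+1}$ solely in $j$), and therefore bounds $|\Jhot(P_{t+1})|\le |K_j|+1$ with threshold $|K_j|\le 16$; you instead allow both $i$ and $i+1$ as potential extras, bounding $|\Jhot(P_{t+1})|\le |K_j|+2$ with threshold $|K_j|\le 15$. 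Both reach the same target of $17$ and the same final probability $\frac{1}{256en}$, so this is a cosmetic difference.
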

\begin{proof}
    This proof is very similar to the proof of Lemma~\ref{lem:state2-to-state3}, but it is easier, since we do not need to have $|I_{10}(P_{t + 1})| \ge \frac{n}{32}$. We call a balanced position in $P_t$ a \emph{hot-candidate} (analogue to the cold-candidates positions in Subsection~\ref{sec:state2}), if there are at least $\frac{n}{16}$ individuals with index in $I_{H1}(P)$ which have a zero-bit in that position.

    To move the hot position to a hot-candidate position $i$, we can choose one of the $\frac{n}{16}$ individuals that have a zero-bit in position $i$ and a one bit in the hot position and flip these two bits without flipping any other bit. It will give us an individual with the same number of one-bits as in its parent and if it replaces the parent, then we balance the hot position and add an extra one-bit to position $i$, which makes it hot. Thus, this does not reduce the diversity and is accepted by the \gsemod. The probability that we do that is at least
    \begin{align*}
        \frac{n}{16} \cdot \frac{1}{(n + 1)} \cdot \frac{1}{n^2} \left(1 - \frac{1}{n}\right)^{n - 2} \ge \frac{1}{16en^2}.
    \end{align*}

    We now show that there are many (at least $\frac{n}{8}$) hot-candidates positions. Consider all the individuals with a one-bit in the hot position, that are, the individuals with index in $I_{H1}(P_t)$. There are $\frac{n + 3}{2}$ such individuals, and since they all have a different fitness, they have at least
    \begin{align*}
        \sum_{i = 0}^{\frac{n + 1}{2}} i = \frac{(n + 1)(n + 3)}{8}
    \end{align*} 
    zero-bits in them. Since at most $\frac{n + 3}{2}$ of these bits are in the cold position, at least 
    \begin{align*}
        \frac{(n + 1)(n + 3)}{8} - \frac{n + 3}{2} = \frac{n^2 - 9}{8}
    \end{align*}
    zero-bits are in the balanced positions. Assume that there are $s < \frac{n}{8}$ hot-candidate positions. Since each of these positions is balanced, none of them can have more than $\frac{n + 1}{2}$ zero-bits in it. The other positions, since they are not hot-candidates, have at most $\frac{n}{16} - 1$ zero-bits in them. Hence by the same arguments as in the proof of Lemma~\ref{lem:cold-candidates-number}, the total number of zero-bits in the individuals with a one-bit in the hot position is
    \begin{align*}
        s \cdot \frac{n + 1}{2} + (n - 2 - s) \cdot \left(\frac{n}{16} - 1\right) < \frac{n^2 - 9}{8},
    \end{align*} 
    when $n \ge 5$. Hence, we must have $s \ge \frac{n}{8}$.

    We now show that at least $\frac{n}{16}$ of these hot-candidates are such that when we move the hot position to them we get $|\Jhot(P_{t + 1})| \le 17$. Consider an event when we move the hot position to position $j$ (which is not necessarily a hot-candidate) by replacing individual $x_i$ with $x_i'$. Consider also some index $k \in [0..n]$. There are two cases, in which $k$ belongs to $\Jhot(P_{t + 1})$, namely, (i) if $k = i$ and $x_i'$ is different from $x_{i - 1}$ only in position $j$ or (ii) if $k \ne i$ and $k \ne i + 1$, and $x_k$ was different from $x_{k - 1}$ only in position $j$ in $P_t$. Note that $k = i + 1$ can never be in $\Jhot(P_{t + 1})$, since $x_i'$ must have a one-bit in position $j$ (moving the hot position to $j$ implies that we increase the number of one-bits in it), and thus it cannot be different from $x_{i + 1}$ only in the new hot position $j$. 
    
    With this observation, for each position $j \in [1..n]$, except for the hot and the cold positions, we denote by $J_j$ the set of indices $k$ such that $x_k$ is different from $x_{k - 1}$ only in position $j$ in $P_t$. Note that these are disjoint sets for different positions $j$ and they are independent from the index of the individual we replace. Then for all $j \in [1..n]$ (except the two almost balanced positions) moving the hot position to $j$ will give us $|\Jhot(P_{t + 1})| \le |J_j| + 1$, where we add one to take into account case (i), when the index of the changed individual is added to $\Jhot(P_{t + 1})$.

    Since $J_j$ are disjoint subsets of $[1..n]$ (note that $k = 0$ cannot be in $J_j$ by the definition of $J_j$), then we have $\sum_j |J_j| \le n$. If we assume that there are at least $\frac{n}{16}$ positions $j$ that have $|J_j| > 16$, then we have $\sum_j |J_j| > \frac{n}{16} \cdot 16 = n$, hence we must have less than $\frac{n}{16}$ such positions. Consequently, there are at least $\frac{n}{8} - \frac{n}{16}$ hot-candidates $j$ with $|J_j| \le 16$, moving the hot position to which yields $|\Jhot(P_{t + 1})| \le 16 + 1 = 17$. The probability to move the hot position to any of these $\frac{n}{16}$ hot-candidates is at least 
    \begin{align*}
        \frac{n}{16} \cdot \frac{1}{16en^2} = \frac{1}{256en}.
    \end{align*}
\end{proof}

We show the main result of this subsection in the following corollary.

\begin{corollary}\label{cor:state1-end-of-phase}
    Consider some iteration $\tau$ with $P_\tau$ in State 1. Then the expected time until the end of phase is at most $576en + O(1)$ iterations and the probability that the phase ends in an optimal population is at least $\frac{1}{3551232en}(1 - O(\frac{1}{n}))$.
\end{corollary}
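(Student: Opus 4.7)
The plan is to decompose the phase starting at iteration $\tau$ into the iterations spent in State~1 and the iterations spent after leaving State~1. For the expected runtime, I first invoke Lemma~\ref{lem:state1-to-state2}: at every iteration $t$ with $P_t$ in State~1, the probability that $|\Jhot(P_{t+1})| \le 17$ is at least $\frac{1}{256en}$, and when this event occurs $P_{t+1}$ is almost balanced (so $\Jhot$ is well defined) and by the state definitions lies in State~2 or State~3. Since transitions from State~1 into State~2 or~3 do not end the current phase, the number $T_1$ of iterations spent in State~1 is dominated by $\Geom(\frac{1}{256en})$, giving $E[T_1] \le 256en$. From either State~2 or State~3 the remaining expected time is bounded by Lemma~\ref{lem:state2} or Lemma~\ref{lem:state3} respectively, both at most $320en + O(1)$. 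Summing the two segments yields the total bound $576en + O(1)$.

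For the probability of ending at an optimal population, I track which event first causes us to leave State~1. Let $F$ be the event that the first iteration leaving State~1 lands in a population with $|\Jhot(P_{t+1})| \le 17$ (i.e., State~2 or~3), and let $G$ be the event that we instead reach optimal directly from State~1. These events are disjoint, and since each State~1 iteration triggers $F$ with probability at least $\frac{1}{256en}$, we have $\Pr[F] + \Pr[G] = 1$. Conditioned on $F$, Lemmas~\ref{lem:state2} and~\ref{lem:state3} both give probability at least $p^\ast := \frac{1}{3551232en}(1 - O(1/n))$ of ending at optimal (using the weaker State~2 bound). Therefore
\begin{align*}
\Pr[\text{end at optimal}] \ge \Pr[G] + \Pr[F] \cdot p^\ast.
\end{align*}
A short case split closes the argument: if $\Pr[G] \ge p^\ast$ the bound is immediate; otherwise $\Pr[F] \ge 1 - p^\ast$, so $\Pr[F] \cdot p^\ast \ge (1 - p^\ast) p^\ast = p^\ast(1 - O(1/n))$ since $p^\ast = O(1/n)$.

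The main obstacle is the bookkeeping in the second step. One must verify that reaching optimal directly from State~1 does not conflict with the $\frac{1}{256en}$ lower bound from Lemma~\ref{lem:state1-to-state2} (it does not, since the move constructed there strictly relocates the hot position and produces an almost balanced population, an event disjoint from reaching optimal), and that the conditional probability from Lemma~\ref{lem:state2} remains valid regardless of how many State~1 iterations preceded (which is immediate, since that lemma applies to any iteration at which the population enters State~2). Alternatively, the entire argument can be packaged inside a single application of Lemma~\ref{lem:phase} mirroring the proof of Lemma~\ref{lem:state2}, with $\omega_1$ being ``this State~1 iteration either ends the phase at optimal, or enters State~2/3 and the resulting sub-phase ends at optimal'', $\omega_2$ the analogous non-optimal termination outcome, and $\omega_3$ ``$P_{t+1}$ still lies in State~1''.
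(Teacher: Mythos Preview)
Your approach is essentially the paper's: dominate the time spent in State~1 by a geometric random variable via Lemma~\ref{lem:state1-to-state2}, then invoke Lemmas~\ref{lem:state2} and~\ref{lem:state3} for the remainder. The expected-time half is fine and matches the paper almost verbatim.

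The probability half has a small hole. You define $F$ as ``the first exit from State~1 lands in a population with $|\Jhot(P_{t+1})|\le 17$'' and then assert $\Pr[F]+\Pr[G]=1$. But $|\Jhot|\le 17$ is only a \emph{sufficient} condition for being in State~2 or~3, not a characterization: a population with $|\Jhot|\in\{18,19\}$, $|I_{10}|\ge n/32$ and $|J_{00}|\le 9$ is in State~3, so one can leave State~1 for State~3 without your event $F$ occurring. Hence $\Pr[F]+\Pr[G]$ need not equal~1, and your case split ``$\Pr[G]<p^\ast \Rightarrow \Pr[F]\ge 1-p^\ast$'' is not justified as written. The fix is trivial---redefine $F$ as ``leave State~1 for State~2 or~3'' rather than via the $|\Jhot|$ threshold---but the paper sidesteps the issue entirely: it simply observes that upon leaving State~1 the population is either optimal, in State~2, or in State~3, and in each of these three cases the conditional probability of ending at optimal is at least $\frac{1}{3551232en+O(1)}$, so the unconditional bound follows by taking the minimum. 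That argument is both shorter and avoids your case split on $\Pr[G]$.
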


\begin{proof}[Proof of Corollary~\ref{cor:state1-end-of-phase}]
    W.l.o.g. assume that $\tau = 1$.
    For all $t \in \N$ let $A_t$ be the event that either $P_{t}$ has an optimal diversity or it is in State 2 or it is in State 3. Let $T$ be the minimum $t$ when $A_t$ occurs. For all $t < T$ we have $P_t$ in State 1 and the phase does not end earlier than in the end of iteration $T$. 
    
    For all $t$ event $A_t$ occurs when we get $|\Jhot(P_{t + 1})| \le 17$. By Lemma~\ref{lem:state1-to-state2} the probability of this is at least $\frac{1}{256en}$, therefore, $T$ is dominated by the geometric distribution $\Geom(\frac{1}{256en})$, and thus $E[T] \le 256en$.

    At iteration $T$ there are three options. First, we might have $P_{T + 1}$ with the optimal diversity and therefore, the phase ends immediately and the probability that it ends in an optimal population is $1$. Second, we can have $P_{T + 1}$ in State 2, then by Lemma~\ref{lem:state2} we need another $320en +O(1)$ iterations in expectation to end the phase and the phase ends by finding a population with optimal diversity with probability at least $\frac{1}{3551232en + O(1)}$. The last option is that we have $P_{T+ 1}$ in State 3 and then by Lemma~\ref{lem:state3} we need another $64en + O(1)$ iterations until we end the phase and in the end of the phase we find an optimal population with probability $\frac{1 - O(\frac{1}{n})}{1156n}$. In all three cases the expected number of iterations until we end the phase is at most $320en +O(1)$, thus, when we are in State 1, the expected number of iterations until the end of the phase is at most $E[T] + 320en +O(1) \le 576en + O(1)$. The probability that the phase ends in an optimal population is at least $\frac{1}{3551232en + O(1)}$.
\end{proof}

\subsection{The Total Runtime}
\label{sec:total-runtime}

In this subsection we prove our main result, that is, Theorem~\ref{thm:runtime}.

\begin{proof}[Proof of Theorem~\ref{thm:runtime}]
    Let $T_i$ be the time of the $i$-th phase of the algorithm and let $N$ be the number of the first successful phase which ends in an optimal population. Then the total runtime $T$ of the algorithm is $T = \sum_{i = 1}^N T_i$.
    We aim at showing that $T$ is integrable and to bound its expectation from above. Since $T$ is a sum of non-negative random variables, then it is also non-negative, hence we have $E[|T|] = E[T]$, and it is enough to only give an upper bound on $E[T]$, since it will also imply that $T$ is integrable. By the law of total expectation we have
    \begin{align}\label{eq:last1}
        \begin{split}
            E[T] &= \sum_{k = 1}^{\infty} \Pr[N = k] E\left[\sum_{i = 1}^N T_i ~\bigg\vert~ N = k\right] \\
            &= \sum_{k = 1}^{\infty} \Pr[N = k] E\left[\sum_{i = 1}^k T_i ~\bigg\vert~ N = k\right]
        \end{split}
    \end{align}

    By Lemmas~\ref{lem:state3} and~\ref{lem:state2} and by Corollary~\ref{cor:state1-end-of-phase}, we have that conditional on $N = k$  for all $i \le k$ we have $E[X_i \mid N = k] \le 576en + O(1) \eqqcolon \Delta$. Hence, for every $k \in \N$ we have
    \begin{align}\label{eq:last2}
        E\left[\sum_{i = 1}^k T_i ~\bigg\vert~ N = k\right] = \sum_{i = 1}^k E[T_i \mid N = k] \le k\Delta
    \end{align}

    Therefore, we have 
    \begin{align}\label{eq:last3}
        E[T] \le \sum_{k = 1}^{\infty} \Pr[N = k] k\Delta = \Delta E[N].
    \end{align}
    
    By Lemmas~\ref{lem:state3} and~\ref{lem:state2} and by Corollary~\ref{cor:state1-end-of-phase} we also have that $N$ is dominated by the geometric distribution $\Geom(\frac{1}{3551232en + O(1)})$, thus $E[N] \le 3551232en + O(1)$. Consequently, we have 
    \begin{align*}
        E[T] \le \delta E[N] \le (576en + O(1))(3551232en + O(1)) = O(n^2).
    \end{align*}
    This is a finite upper bound, which implies the correctness of Eqs.~\eqref{eq:last3}, \eqref{eq:last2} and then~\eqref{eq:last1} (in that order).\qedhere

\end{proof}

\section{Conclusion}
\label{sec:conclusion}

In this paper we have analysed the last stage of the optimization of the total Hamming distance on \oneminmax with \gsemod. We have shown that the population of \gsemod performs a random walk, and the rigorous study of this random walk reveals that a significant part of it is spent in a "good" region of the populations' space, where we have a lot of opportunities to make progress. This is in a big contrast with the previous study~\cite{DBLP:conf/gecco/DoerrGN16}, which pessimistically assumes that we always have the minimal number of such opportunities. We show that the pessimism is too critical in the last stage and increases the upper bound by a factor of $\Omega(n)$. This also indicates that the pessimism of~\cite{DBLP:conf/gecco/DoerrGN16} is too strong in the earlier stages, that is, during those stages the population is also likely to perform a random walk, which often visits good regions of the populations' space. Our study suggests the properties of the population which should be considered, shows rigorously how the population can change during the random walk and suggests some suitable (but not novel) methods for the analysis of this random walk, such as dividing it into phases. We are optimistic that these observations can be also helpful outside of the EDO context, e.g., in the analysis of population-based algorithms.

Intuition suggests that having more opportunities to improve the diversity in the earlier stages of the optimization should result into a coupon collector effect and give the expected runtime of $O(n^2\log(n))$ iterations to optimize the diversity form scratch.
Taking into account that the \gsemod finds a population covering the whole Pareto front of \oneminmax starting from a random point in $O(n^2\log(n))$ expected time (which was shown in~\cite{GielL10}, Theorem~3), this would suggest that optimizing the diversity is not asymptotically harder than covering the Pareto front (however, this statement also requires a lower bound on the time to cover the front).

In general we are optimistic that the analysis of random walks of complicated sets of solutions provided in this paper might be fruitful when studying the EDO on different problems. We also note that the leading constants in our results are unnaturally large due to the pessimistic assumptions we made in our proofs. However, our preliminary empirical study suggests that during a typical run we do not have many individuals in distance one from the individuals in the neighboring fitness levels, and therefore, in our analysis we could disregard the presence of individuals in $J_{10}$, $J_{00}$ and $J_{11}$. This would allow us to simplify our analysis and would significantly reduce the leading constants. For this reason a more rigorous empirical study of this problem is one of the most interesting future directions of this topic. 

\section*{Acknowledgements}
This work was supported by the Australian Research Council through grants DP190103894 and FT200100536.

\end{document}